\newtheorem{lemma}{Lemma}
\newtheorem{definition}{Definition}
\newtheorem{prop}{Proposition}
\definecolor{COLOR}{rgb}{0, 0, 0}
\title{ComENet: Towards Complete and Efficient Message Passing for 3D Molecular Graphs}
\author{%
  Limei Wang\thanks{Equal contribution} \\
  Texas A\&M University\\
  College Station, TX 77843 \\
  \texttt{limei@tamu.edu} \\
   \And
   Yi Liu\footnotemark[1] \\
  Florida State University\\
  Tallahassee, FL 32306 \\
   \texttt{liuy@cs.fsu.edu} \\
   \AND
   Yuchao Lin \\
  Texas A\&M University\\
  College Station, TX 77843 \\
   \texttt{kruskallin@tamu.edu} \\
   \And
   Haoran Liu\\
  Texas A\&M University\\
  College Station, TX 77843 \\
   \texttt{liuhr99@tamu.edu} \\
   \AND
   Shuiwang Ji \\
  Texas A\&M University\\
  College Station, TX 77843 \\
   \texttt{sji@tamu.edu} \\
}
\begin{document}

\maketitle

\begin{abstract}
Many real-world data can be modeled as 3D graphs, 
but learning representations that incorporates 3D information completely and efficiently is challenging. Existing methods either use partial 3D information, 
or suffer from excessive computational cost.
To incorporate 3D information completely and efficiently,
we propose a novel message passing scheme that operates within 1-hop neighborhood.
Our method guarantees full completeness of 3D information on 3D graphs by achieving global
and local completeness.
Notably, we propose the important rotation angles 
to fulfill global completeness.
Additionally, we show that our method is orders of magnitude faster than prior methods.
We provide rigorous proof of completeness
and analysis of time complexity for our methods.
As molecules are in essence quantum systems, 
we
build the \underline{com}plete and \underline{e}fficient graph neural network (ComENet) by combing quantum inspired basis functions and the proposed message passing scheme.
Experimental results demonstrate the capability and efficiency of ComENet, especially on real-world
datasets that are large in both numbers and sizes of graphs.
Our code is publicly available
as part of the DIG library (\url{https://github.com/divelab/DIG}).
\end{abstract}

\section{Introduction}
In machine learning, structured objects such as molecules~\citep{duvenaud2015convolutional,wu2018moleculenet, wang2022advanced}, proteins~\citep{borgwardt2005protein,fout2017protein,liu2020deep,morehead2021geometric,jumper2021highly}, materials~\citep{xie2018crystal}, and quantum systems~\citep{kochkov2021learning, fu2022lattice} are usually modeled as graphs.
Original modeling shows basic connections between units and the resulted data type is known as 2D graphs. Accordingly, 2D graph neural networks (GNNs) have been intensively studied~\citep{duvenaud2015convolutional,gao2020topology,gao2018graph,xu2018powerful,zhang2018end,liu2021dig} 
and the message passing scheme~\citep{gilmer2017neural,battaglia2018relational,vignac2020building}
is shown effective to realize 2D GNNs.
However, in the modern machine learning era, it is increasingly accepted that modeling real-world data like molecules as 2D graphs leads to inborn defects for succeeding learning models.
In practice, 3D information is crucial, based on which some important geometries can be derived,
such as chemical bond lengths in 
molecular modeling. This essentially raises the need of a new data type, known as 3D graphs.

Formally, a 3D graph contains the original 2D graph as well as Cartesian coordinates for all nodes. In this work, we follow
invariant 3D GNNs~\citep{schutt2017schnet,klicpera_dimenet_2020,klicpera_dimenetpp_2020,shuaibi2021rotation,liu2022spherical,klicpera2021gemnet} 
that take relative 3D information,
like distances and angles, as inputs to networks.
Such relative geometries are naturally SE(3)-invariant.
The main challenge comes from what geometries should be computed such that 3D information is incorporated completely.
A most recent work SphereNet~\citep{liu2022spherical} 
shows that distance, angle, and torsion information is
necessary to incorporate more comprehensive 3D information.
However, SphereNet is only complete in local neighborhood,
failing to achieve global completeness and distinguish a wide range of molecular structures such
as conformers.
\textcolor{COLOR}{Let $n$ and $k$ denote the number of nodes and the average degree in a 3D graph.}
Existing methods also exhibit excessive time complexity of
$O(nk^2)$ or even $O(nk^3)$, severely preventing their scalability in real-world applications.

In this work, we propose ComENet as a \underline{com}plete and \underline{e}fficient graph neural network for 3D molecular graph learning.
We first formally provide the definition of \emph{completeness}.
Intuitively, a geometric transformation is considered as complete if it generates distinct representations for any two different 3D graphs.
Based on this, we propose a novel message passing scheme by faithfully fulfilling global completeness via the important rotation angles.
In addition, we design novel strategies to achieve local completeness,
largely reducing the computing complexity to $O(nk)$.
To elucidate the merits of the proposed methods,
we provide rigorous proof of geometric completeness achieved
by our method.
Combining the novel message passing scheme and quantum inspired features,
ComENet is developed for 3D molecular graphs.
We apply ComENet to two large-scale datasets including OC20 and  Molecule3D, and a 
commonly used dataset QM9.
Experiments show that ComENet performs similar to existing best methods,
but accelerates the training and inference by 6-10 times on various datasets.
\textbf{We summarize the contributions of ComENet as below}.
(i). To our best knowledge, it is the first rigorously complete pipeline for 3D molecular graph learning. Theoretically, it is guaranteed to incorporate 3D information completely without information loss. Practically, it can distinguish all molecular structures in nature.
(ii). It is highly efficient. The message passing is shown to be orders of magnitude faster than existing methods in terms of time complexity.
(iii). The great capability and efficiency of ComENet allow its scalability to real-world molecule datasets that are large in both numbers and sizes of graphs.
(iv). It achieves similar or better performance compared with existing methods, and dramatically accelerates the training and inference by 6-10 times.

\textbf{Relations with Prior Work.}
SphereNet~\citep{liu2022spherical} is a recent method that 
achieves local completeness with a complexity of $O(nk^2)$.
In summary, SphereNet is not complete and not efficient enough for processing large-scale molecular graphs.
However, ComENet is complete and much more efficient with a complexity of $O(nk)$.
Technically, a primary difference is that, ComENet
proposes to use the important rotation angles to achieve global
completeness.
As a result, ComENet is provably complete as shown in Sec.~\ref{sec:global}.
Practically, ComENet is able to identify a wide range of real-world
structures such as conformers, achieving completeness
at the conformer level,
as introduced in 
Sec.~\ref{sec:glob_conformer}.
In addition, ComENet and SphereNet both use $(d,\theta,\phi)$ in the spherical coordinate system (SCS) to obtain local completeness. 
Indeed, the fact that $(d,\theta,\phi)$ can specify the location of a point in SCS is widely known. The key difference lies in that, SphereNet
operates within 2-hop neighborhood, while ComENet operates within 1-hop neighborhood. When coupled with rotation angles, ComENet can achieve provable completeness with a reduced complexity of $O(nk)$.
This different message passing for local completeness in ComENet entails many differences with SphereNet, including building coordinate systems, defining $z$-axis, choosing reference nodes, and computing $(d,\theta,\phi)$,
as detailed in Sec.~\ref{sec:local}.
All the computing procedures for ComeNet are described in detail in Algorithm~\ref{alg:alg} of Appendix~\ref{sec:alg}.

\vspace{-5pt}
\section{The Proposed Message Passing Scheme}

\subsection{Notations \& Definitions} \label{sec:def}
We first formally define notations and the concept of \emph{completeness} used in this paper.

\textbf{Notations.}
A 3D graph $G$ can be represented as $G=(V, A, P)$.
The node feature matrix $V=[\textbf{v}_1, \textbf{v}_2,\cdot\cdot\cdot,\textbf{v}_n]^T \in \mathbb{R}^{n \times d_v}$
with each $\textbf{v}_i\in \mathbb{R}^{d_v}$.
The adjacency matrix $A\in \mathbb{R}^{n \times n}$,
based on which we additionally define there is an edge $e_{ij}$ if $A[i][j]=1$.
The position matrix $P=[\textbf{p}_1, \textbf{p}_2,\cdot\cdot\cdot,\textbf{p}_n]^T \in \mathbb{R}^{n \times 3}$,
where $\textbf{p}_i = (x_i, y_i, z_i)\in \mathbb{R}^{3}$ is the position vector for node $i$
given in the Cartesian coordinate system (CSC).
The relative position $\textbf{p}_{ij}$ of node $j$ to node $i$ is defined as
$\textbf{p}_{ij} = \textbf{p}_{j} - \textbf{p}_{i}$.
Particularly, throughout this paper, we define $k$ as the average degree
for $G$.

We then formally define \emph{completeness} given a geometric transformation $\mathcal{T}$.
In particular, we aim at incorporating 3D information in 3D molecular graphs. Hence,
our definition of \emph{completeness} is set from the geometric view.
Generally, $\mathcal{T}$ maps a 3D graph $G=(V,A,P)$ to a geometric representation with size $m\times h$\textcolor{COLOR}{, where $m$ is the number of transformed geometric features and $h$ is the feature size}. 
$\mathcal{T}$ can be different dependent on different methods, resulting in different $m \in\mathbb{N}^+$ or
$h \in\mathbb{N}^+$.
For example, SchNet~\citep{schutt2017schnet} \textcolor{COLOR}{only computes the distance for each edge based on the coordinates of the two nodes connected by this edge.}
Thus, SchNet maps $G$ to a representation with size $m\times h$,
where $m$ is the number of edges in $G$ and $h=1$.
However, such geometric transformation is not complete.
We provide the definition of \emph{completeness} as below:

\begin{definition}[Completeness]
For two 3D graphs $G_1=(V, A, P_1)$ and $G_2=(V, A, P_2)$,
a geometric transformation $\mathcal{T}:(\mathbb{R}^{n\times d_v}, \mathbb{R}^{n\times n}, \mathbb{R}^{n\times 3}) \mapsto \mathbb{R}^{m\times d}$ is considered as complete when
\begin{equation*}
\mathcal{T}(G_1) = \mathcal{T}(G_2) \iff \exists R \in \mathrm{SE(3)}, P_1 = R(P_2).
\end{equation*}
\label{def: gt}
\end{definition}
\vspace{-20pt}
\textcolor{COLOR}{Here $\mathrm{SE(3)}$ denotes the Special Euclidean group in 3 dimensions. It include all rotations and translations in 3D~\citep{adams2022learning, hoogeboom2022equivariant,liu2022spherical, satorras2021n}. Thus $R$ is a transformation that combines rotation and translation. A rotation transformation can be represented with a $3 \times 3$ rotation matrix, and a translation transformation can be represented by a $3 \times 1$ vector. Matrix form of $\mathrm{SE(3)}$ is provided in Appendix~\ref{SE(3)}.
The reason why we introduce $\mathrm{SE(3)}$ lies in that, a combination of rotation and translation will not change the 3D conformation of a 3D graph.
In Def.~\ref{def: gt}, if $P_1$ and $P_2$ are in the same $\mathrm{SE(3)}$ group, then $G_1$ and $G_2$ would share the same 3D conformation.
As a result, $G_1$ and $G_2$ would be the same 3D graph.}
Intuitively, a complete geometric transformation $\mathcal{T}$ can distinguish any two different 3D graphs.
This is to say, as long as two 3D graphs differ in \textcolor{COLOR}{3D conformations}, their outputs from $\mathcal{T}$ would be different.

\subsection{Global Completeness via Rotation Angles}   \label{sec:global}
Existing studies focus on the complete representation learning of local neighborhood.
Earlier methods like SchNet~\citep{schutt2017schnet} and DimeNet~\cite{klicpera_dimenet_2020} cannot \textcolor{COLOR}{achieve} local completeness.
In a more recent method SphereNet, completeness is guaranteed within \textcolor{COLOR}{edge-based 1-hop} local neighborhood,
but fails to hold in the whole 3D graph. 
In this section, we move a step forward to
formally fulfill global completeness  
for a given 3D molecular graph.
Particularly, for the purpose of clear illustration, we safely assume local completeness is already obtained by exiting methods like SphereNet.

\begin{wrapfigure}[14]{l}{0.64\textwidth}\vspace{-20 pt}
     \centering
     \subfloat[Illustration of a 2-hop local structure.]
     {\includegraphics[width=0.3\textwidth]{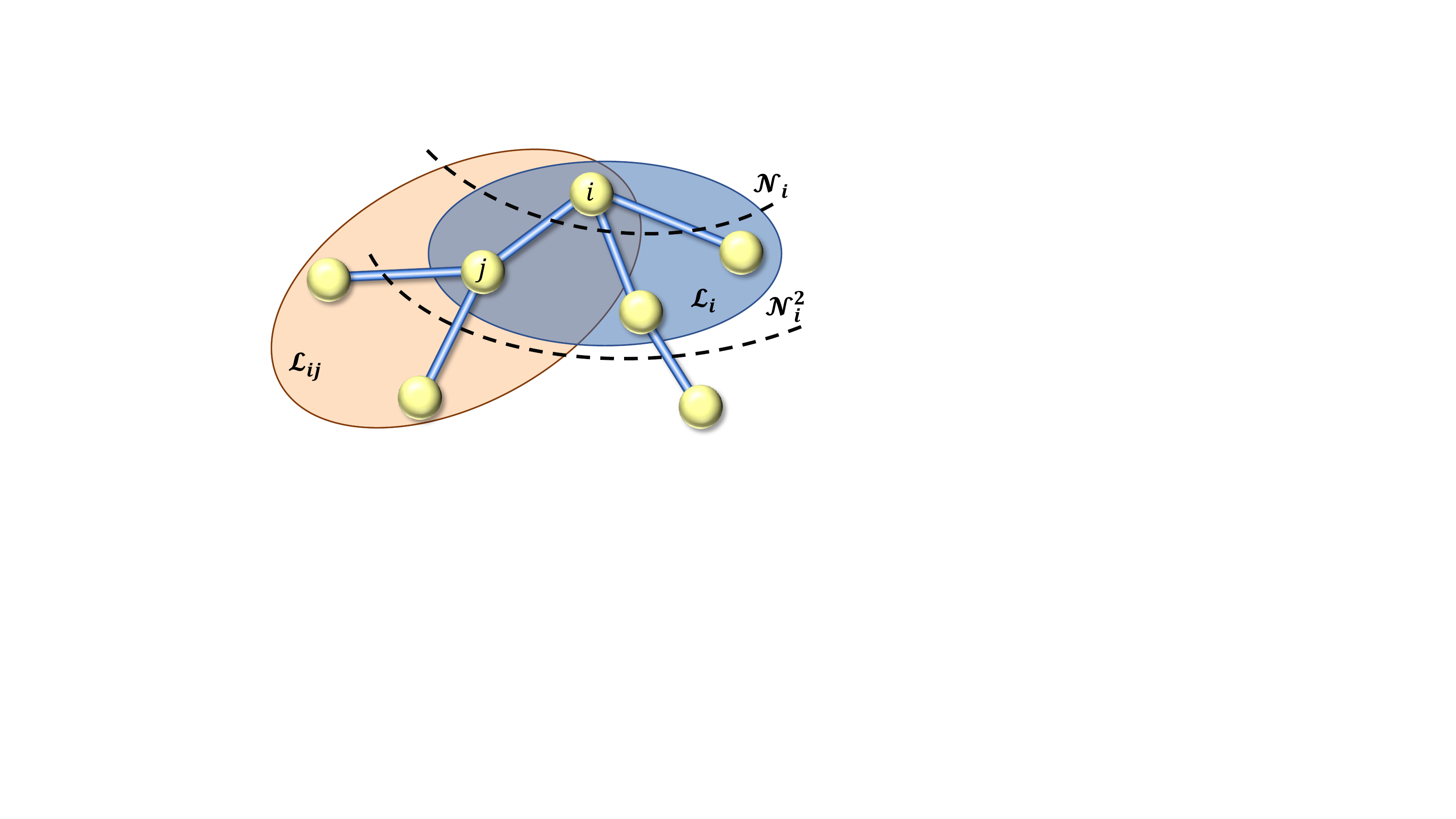}\label{fig:two-hop}}
     \quad
     \subfloat[Illustration of computing the rotation angle for an edge $e_{ij}$.]
     {\includegraphics[width=0.3\textwidth]{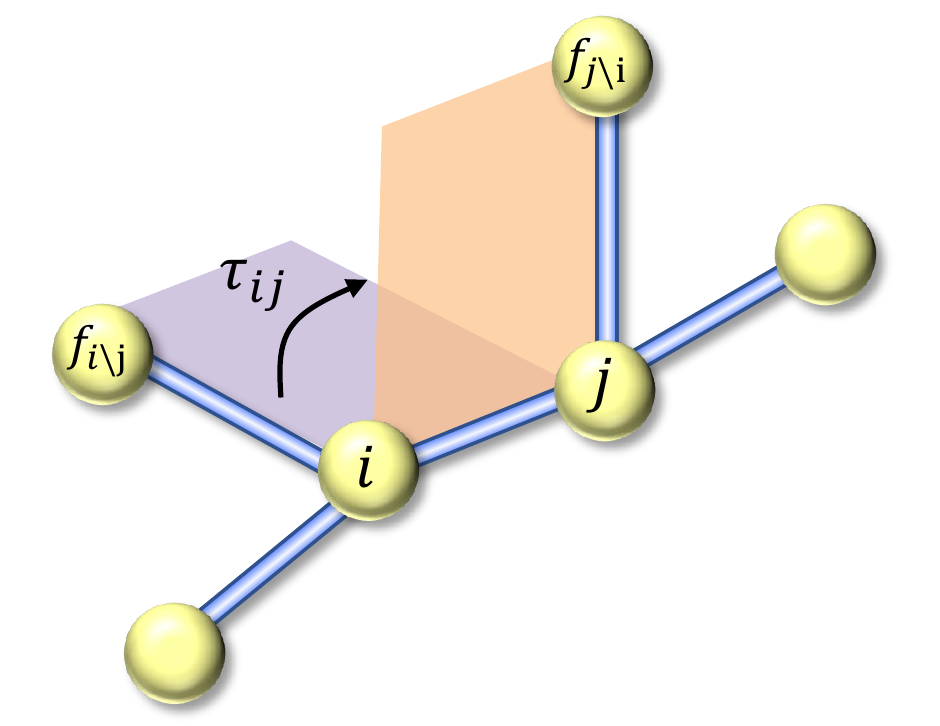}\label{fig:torsion}}
     \vspace{-6 pt}
    \caption{Illustrations of how to achieve global completeness in our proposed methods.}
    \label{fig:method}
    \vspace{-10 pt}
\end{wrapfigure}

Without loss of generality, we start by describing our method
to attain full completeness within 2-hop neighborhood, as illustrated in Fig.~\ref{fig:method}\subref{fig:two-hop}.
Formally, for a center node $i$, we let $\mathcal{N}_i$ and $\mathcal{N}^2_i$ denote two sets of indices of $i$'s 1-hop
and 2-hop neighboring nodes, respectively.
We also define any node $i$ and its 1-hop neighborhood as a local structure.
Then the whole 2-hop neighborhood of node $i$
can be viewed as $1+|\mathcal{N}_i|$ local structures centered in $i$ and $\mathcal{N}_i$, defined as 
$\mathcal{L}_{i}$ and $\mathcal{L}_{ij, j\in \mathcal{N}_i}$, respectively. 
\textcolor{COLOR}{As shown in Fig.~\ref{fig:method}\subref{fig:two-hop}, $\mathcal{L}_{i}$ is the local structure centered in $i$, and $\mathcal{L}_{ij}$ is the local structure centered in $j$.}
Apparently, each local structure $\mathcal{L}_{ij, j\in \mathcal{N}_i}$ shares the common edge $e_{ij}$ with the local structure $\mathcal{L}_{i}$.
Given the complete representation for each local structure, for the structure $\mathcal{L}_{i} \cup \mathcal{L}_{ij}$, the only remaining degree of freedom is the rotation angle of edge $e_{ij}$, denoted as $\tau_{ij}$.
With the rotation angles for all the $|\mathcal{N}_i|$ common edges specified, we can obtain a complete representation for 2-hop neighborhood of node $i$. 
Achieving completeness beyond 2-hop neighborhood is similar.
Overall, after considering rotation angles, the global completeness can be easily guaranteed when it gradually generalizes from $n$- to $(n+1)$-hop neighborhood.

Essentially, each edge in an input graph can be treated as a common edge between different local structures.
We reveal how to compute the rotation angle $\tau_{ij}$ for each edge $e_{ij}$ in Fig.~\ref{fig:method}\subref{fig:torsion}.
Specifically, we choose two reference nodes whose indices are $f_{i\backslash j}$ and $f_{j\backslash i}$ for node $i$ and $j$, respectively.
$f_{i\backslash j}$ denotes the index of $i$'s nearest neighboring node except $j$, and $f_{j\backslash i}$ denotes $j$'s nearest neighboring node except $i$.
Then $\tau_{ij}$ for edge $e_{ij}$ is the angle from the plane formed by $f_{i\backslash j},i,j$ to the plane formed by $i,j,f_{j\backslash i}$.
As analyzed previously, the global conformation of the input graph can be identified based on all local structures and rotation angles.
As a result, given fixed local structures, 
the global completeness the input 3D molecular graph is fulfilled by additionally considering the rotation angle of each edge, as introduced in this section.
\textcolor{COLOR}{Note that in terms of the selection of reference nodes for computing a rotation angle, we think a selection strategy is valid as long as it is applied to every edge consistently. In this work, for an edge $e_{ij}$, we choose the nearest neighboring nodes for $i$ and $j$ as reference nodes, which are $f_{i\backslash j}$ and $f_{j\backslash i}$, respectively. We apply this selection strategy to all edges in a 3D graph for computing corresponding rotation angles.
With such selection strategy, we can prove that our method is complete as shown in Sec.~\ref{sec:comp}.
We also show in Appendix~\ref{sec:proof} that it is easier to prove completeness using nearest neighboring nodes as reference nodes.
}

\subsection{Rotation Angles for Conformer Identification}  \label{sec:glob_conformer}
The proposed rotation angles 
in Sec.~\ref{sec:global} play a crucial role
in identifying some important molecular structures, such as conformers.
In nature, a real molecule exits as an ensemble of interconventing 3D structures,
known as conformers~\citep{axelrod2022geom,axelrod2020molecular,ganea2021geomol}.
Different conformers posses the same 2D molecular graph, but differ in 3D structures.
Generally, a conformer for a molecule exits with a certain probability and may exhibit 
distinct properties~\citep{axelrod2022geom,axelrod2020molecular}.
As shown in Fig.~\ref{fig:conformer}(a),
different conformers for the molecule butane ($\text{C}_4\text{H}_{10}$)
show varying conformation energy.
To this end, it is important to design complete 3D GNNs for 
identifying molecules at the conformer level.
From the geometry perspective of view,
a conformer distinguishes itself from others mainly
through varying rotation angles of chemical bonds~\citep{ganea2021geomol}.
As shown in Fig.~\ref{fig:conformer}(b), 
given the fixed ethyl (-$\text{C}_2\text{H}_{5}$) of both sides, 
the only degree of freedom is the rotation angle of the C-C bond.
In literature, the ethyl is formulated as a local 3D graph. 
Existing studies focus on the complete representation of 
such local structures, failing to identify the whole 3D graph globally.
Essentially, they can only distinguish different molecules, trying to achieve
completeness at the molecule level rather that the finer comformer level.
By integrating the rotation angles as in Sec.~\ref{sec:global} into the message passing scheme,
our methods can fulfill rigorous completeness at the conformer level and can distinguish all conformers in nature.

\begin{figure}[t]
    \centering
    \includegraphics[width=0.9\textwidth]{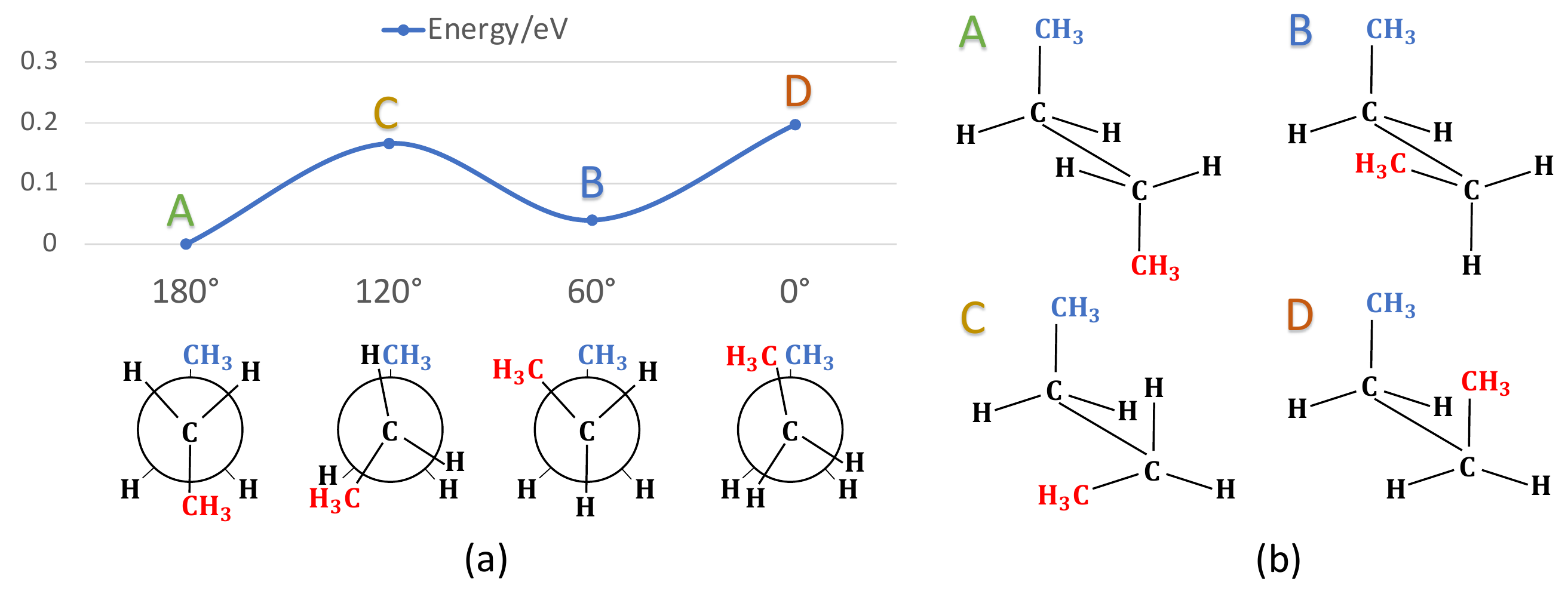}
    \vspace{-8 pt}
    \caption{(a). Illustration that the relative conformation energy of butane is a function of the rotation angle of the C-C bond.
    (b). A 3D view of the four conformers in (a).
    }\label{fig:conformer}
    \vspace{-15 pt}
\end{figure}

\subsection{Local Completeness with Improved Efficiency} \label{sec:local}

Formally, each node and its local neighborhood can be viewed as a local structure.
Existing studies focus on the learning of local structures and 
SphereNet achieves local completeness.
However, SphereNet induces the complexity of $O(nk^2)$,
restricting its scalability on large molecules in practice.
Here, we design a novel strategy to guarantee local completeness
with the computing cost of $O(nk)$.

Specifically, 
we follow SphereNet and perform on the spherical coordinate system.
It is commonly known that the location of each node can be completely determined using the tuple $(d,\theta,\phi)$ in SCS.
SphereNet employs the directional message passing (DMP) fashion that operates within 2-hop neighborhood.
It first updates messages over edges thus the center edge is
$z$-axis in SCS. For node $i$, the computing of the tuple $(d,\theta,\phi)$ 
involves 2-hop information.
However, we view 1-hop neighborhood as a local structure, which requires all strategies in our local completeness to be different from SphereNet,
including building local coordinate systems, defining z-axis, picking reference nodes, and computing $(d,\theta,\phi)$.
First, we build a light local coordinate system for any node $i$'s corresponding local structure.
Similarly, the center node $i$ serves as the origin.
Then $z$-axis is defined as the direction from $i$ to its nearest neighbor $f_i$,
and $xz$-plane is further formed by $z$-axis and $i$'s second nearest neighbor $s_i$.
Finally, the tuple $(d,\theta,\phi)$ is computed within 1-hop neighborhood with a complexity of $O(nk)$.
Particularly, we analyze efficiency versus model expressiveness in Sec.~\ref{sec:effi}. We show that compared with the DMP fashion used by SphereNet,
our method operating within 1-hop neighborhood
hurts the model expressiveness a bit by largely
improves the efficiency.

\subsection{Message Passing Scheme} \label{sec:mp}
Based upon global completeness achieved in Sec.~\ref{sec:global}
and improved local completeness introduced in Sec.~\ref{sec:local},
the complete geometric transformation $\mathcal{T}$ required by
Def.~\ref{def: gt} should be formulated based on a 4-tuple as 
$(d,\theta,\phi, \tau)$.
Specifically, we build such transformation within 1-hop neighborhood,
and a 4-tuple is computed for each edge.
Hence, for a 3D graph $G=(V,E,P)$,
the full expression for $\mathcal{T}$ is
$\mathcal{T}(G)=[(d_{ij},\theta_{ij},\phi_{ij}, \tau_{ij})]_{i=1,...,n; j\in \mathcal{N}_i} \in \mathbb{R}^{m\times 4}$,
where $m$ is the number of edges in $G$.
Especially, as $\mathcal{T}$ converts absolute Cartesian coordinates in $P$
to relative information, it is naturally SE(3)-invariant
as required in Def.~\ref{def: gt}.
To this end, we formally build our message passing scheme as
\begin{equation}
    \textbf{v}^{\prime}_i = g\left(\textbf{v}_i, \sum_{j\in\mathcal{N}_i}f\left(\textbf{v}_j, d_{ij}, \theta_{ij}, \phi_{ij}, \tau_{ij}\right)\right),
    \label{eq:mp}
\end{equation}
where $g$ and $f$ can be implemented by neural networks or mathematical operations.
Intuitively, our message passing is established in 1-hop local neighborhood and all edges connecting regions beyond.
Essentially, $d_{ij}$, $\theta_{ij}$, and $\phi_{ij}$
specify the 1-hop local neighborhood, and
$\tau_{ij}$ determines the orientation of the 
local neighborhood. By doing this, the complete representation 
for a whole 3D molecular graph is eventually achieved.
The formulas of computing of $d_{ij}$, $\theta_{ij}$, and $\phi_{ij}$, and $\tau_{ij}$
are shown in Algorithm~\ref{alg:alg} in Appendix~\ref{sec:alg},
along with detailed description of the
complexity of $O(nk)$.
\textcolor{COLOR}{
Overall, our formal analysis in Sec.~\ref{sec:global}, Sec.~\ref{sec:glob_conformer}, and Sec.~\ref{sec:local}
lead to the proposed message passing scheme defined in Eq.~\ref{eq:mp}.
It is the first fully complete scheme with great efficiency of $O(nk)$.
We also provide rigorous proof on completeness and analysis on efficiency of our message passing in Sec.~\ref{sec:merits}.
Note that to achieve efficiency, our message passing scheme adopts a novel strategy that computes all the needed geometries within 1-hop neighborhood.
Hence, it can not be directly applied to existing architectures built in 2-hop neighborhood, such as DimeNet++~\cite{klicpera_dimenetpp_2020} and SphereNet~\cite{liu2022spherical}.
To this end, we design a new network to implement the proposed message passing scheme, as detailed in Sec.~\ref{sec:network}.
}

\section{Merits of Our Methods} \label{sec:merits}

\subsection{Geometric Completeness}  \label{sec:comp}

\begin{prop}
For a strongly connected 3D graph $G=(V,E,P)$, its geometric transformation
$\mathcal{T}(G)=[(d_{ij},\theta_{ij},\phi_{ij}, \tau_{ij})]_{i=1,...,n; j\in \mathcal{N}_i}$
is complete.
\label{theorem1}
\end{prop}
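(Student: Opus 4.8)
The statement is an equivalence, so the plan is to establish the two implications of Definition~\ref{def: gt} separately, with nearly all of the work falling on the reconstruction direction. For the easy direction, suppose $P_1 = R(P_2)$ for some $R \in \mathrm{SE(3)}$. I would observe that every coordinate of $\mathcal{T}(G)$ is a function of relative geometry only: $d_{ij} = \|\textbf{p}_{ij}\|$ is a distance, while $\theta_{ij}$, $\phi_{ij}$, and $\tau_{ij}$ are angles between vectors and dihedral angles between planes, each built from difference vectors $\textbf{p}_a - \textbf{p}_b$. Since a rotation preserves lengths and angles and a translation cancels in any such difference, all four quantities are $\mathrm{SE(3)}$-invariant; hence $\mathcal{T}(G_1) = \mathcal{T}(G_2)$. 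This simultaneously confirms the invariance claimed in Sec.~\ref{sec:mp}.

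The substance is the converse. The plan is to show that $\mathcal{T}(G)$ determines $P$ uniquely up to a single global element of $\mathrm{SE(3)}$, by exhibiting a reconstruction that is well-defined modulo rigid motion; injectivity modulo $\mathrm{SE(3)}$ then turns $\mathcal{T}(G_1) = \mathcal{T}(G_2)$ directly into $P_1 = R(P_2)$. For the base case I fix an arbitrary node, say $1$, place it at the origin, and use its nearest and second-nearest neighbors $f_1, s_1$ to orient the $z$-axis and $xz$-plane; by the assumed local completeness the tuples $(d_{1j}, \theta_{1j}, \phi_{1j})_{j \in \mathcal{N}_1}$ then fix the entire local structure $\mathcal{L}_1$, and this initial placement exhausts the global $\mathrm{SE(3)}$ gauge freedom. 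The inductive step is the two-hop gluing argument of Sec.~\ref{sec:global}: given an already-placed node $i$ and an edge $e_{ij}$ to a not-yet-expanded node $j$, local completeness fixes the shape of $\mathcal{L}_j$; I align it to the current frame by matching the shared edge, which is legitimate because $d_{ij} = d_{ji}$ makes the edge length consistent in both structures, and this leaves exactly one rotational degree of freedom about the axis $e_{ij}$. Since the reference nodes $f_{i\backslash j}$ and $f_{j\backslash i}$ are already positioned, the dihedral angle between the planes $(f_{i\backslash j}, i, j)$ and $(i, j, f_{j\backslash i})$ is determined; setting it equal to $\tau_{ij}$ removes the last degree of freedom and pins down every neighbor of $j$.

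To finish, I would run this gluing along a breadth-first traversal of $G$. Strong connectivity guarantees a path from node $1$ to every other node, so every local structure is eventually expanded and every node placed; because each step is forced once the base frame is chosen, the reconstructed $P$ is unique up to the initial $\mathrm{SE(3)}$ choice, which is exactly the injectivity we need.

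The main obstacle is making the gluing step rigorous. I must verify that the signed dihedral convention for $\tau_{ij}$ captures the full rotation about the edge axis, so that it selects a single orientation rather than a reflected pair, and I must address degenerate configurations where a reference node is collinear with $e_{ij}$ or where a node has too few neighbors to define its local frame. I expect these to be handled by the consistent nearest-neighbor selection rule emphasized in Sec.~\ref{sec:global}, which the paper notes makes the completeness argument easier, together with a general-position assumption on the molecular geometry; consistency around cycles is automatic in this setting, since both graphs share identical tuples and therefore reconstruct to congruent embeddings.
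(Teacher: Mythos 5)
Your outline matches the paper's strategy: the easy direction is the $\mathrm{SE(3)}$-invariance of the relative geometries, and the converse is an incremental reconstruction in which each new placement is forced by the stored tuples, with $\tau$ removing the last rotational degree of freedom about a shared edge. The differences lie in execution. The paper runs its induction one node at a time rather than one local structure at a time, and isolates the forced-placement step as Lemma~\ref{lemma1}, proved in Appendix~\ref{sec:proof} by contradiction: two candidate positions $\textbf{p}_{ij}$ and $\widetilde{\textbf{p}}_{ij}$ are shown to satisfy a system of inner-product and triple-product equations that forces them to coincide. That algebraic argument is precisely what discharges the two obstacles you flag at the end. The signed-dihedral worry is resolved because the system records both $\cos$ and $\sin$ of $\phi$ (resp.\ $\tau$), via an inner product and a scalar triple product against the axis $\textbf{p}_{if_i}$, so no reflected twin survives. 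The low-degree worry is resolved by the case split in Lemma~\ref{lemma1}: when the attachment node $i$ has at least two neighbors, the new node is pinned by $(d_{ij},\theta_{ij},\phi_{ij})$ alone; when $i$ has only one neighbor $c_1$, the paper switches to the rotation angle $\tau_{c_1 i}$ of the \emph{incoming} edge rather than $\tau_{ij}$ — a configuration your gluing recipe, which always orients $\mathcal{L}_j$ by $\tau_{ij}$ and therefore needs $f_{i\backslash j}$ to exist, does not cover as written (think of expanding along a chain). So your plan is correct and takes essentially the paper's route, but the steps you defer to "general position" and "consistent reference selection" are exactly where the paper's proof content lives.
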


\begin{proof}
\vspace{-5 pt}
We employ mathematical induction and assume the number of nodes in a 3D graph is $n$.
\textcolor{COLOR}{Note that the 3D graph we consider is strongly connected, which means that there exist a path between any two nodes in the graph. All the molecules in nature can be constructed as strongly connected graphs.}

Base case: It is obvious that the 3D structure of $G$ can be identified when $n=1,2$.
Hence, we let $n=3$ be the base case, where the completeness can be achieved by only considering $d$ and $\theta$ in $\mathcal{T}$.

Inductive hypothesis: The claim that $\mathcal{T}$ is complete holds for the node numbers of $n$ up to $k\geq3$.

Inductive step: Let $n=k+1$. Without loss of generality, among the existing $k$ nodes, we safely assume $i$ and 
its neighboring nodes $c_{k = 1,2,...}$ form the local region of interest.
Then $j$ is the index of the newly $(k+1)$-th node connected to the center node $i$.
To show global completeness of the whole graph, based on Def.~\ref{def: gt},
we only need to prove that the relative position of the new node $j$ is uniquely determined given
$\mathcal{T}$.
We propose the following lemma for this as

\begin{lemma}
Assume a strongly connected 3D graph $G=(V,E,P)$ with more than 2 nodes is fully identified.
If a new node $j$ is connected to a node $i$ of $G$ following the geometric transformation $\mathcal{T}(G)=[(d_{ij},\theta_{ij},\phi_{ij}, \tau_{ij})]_{i=1,...,n; j\in \mathcal{N}_i}$, then $\textbf{p}_{ij}$ is uniquely determined.
\label{lemma1}
\vspace{-5 pt}
\end{lemma}
The proof of Lemma~\ref{lemma1} is provided in Appendix~\ref{sec:proof}.
With Lemma~\ref{lemma1} successfully proved, we show that such geometric transformation $\mathcal{T}$
can determine a unique 3D graph.
Hence, the $\emph{if}$ condition in Def.~\ref{def: gt} holds.
In addition, as $\mathcal{T}$ renders purely relative 3D information like distance and angle, it's naturally 
SE(3) invariant. 
Hence, the $\emph{only if}$ condition in Def.~\ref{def: gt} holds.
Overall, based on Def.~\ref{def: gt}, we complete the proof of Prop.~\ref{theorem1}.
\vspace{-5 pt}
\end{proof}

Intuitively, since a molecular graph is strongly connected, two
arbitrary nodes are connected by at least one path. Hence, starting from the existing structure, we can restore the relative position of any new node step by step along a path with finite length.
As a result, geometric completeness for the whole 3D graph with any number of nodes can be guaranteed in our message passing scheme.
Theoretically, 3D information of the input 3D molecular graph
is fully captured without information loss.
In practice, our method can distinguish
all structures in nature.

\subsection{Efficiency}  \label{sec:effi}
\textbf{Efficiency versus Model Expressiveness:} Our method induces the complexity of $O(nk)$ by operating within 1-hop neighborhood.
Existing methods, such as like DimeNet and SphereNet, employ the DMP fashion that update edges
within 2-hop neighborhood, inducing the complexity of $O(nk^2)$.
Notably, DMP~\citep{stokes2020deep,yang2019analyzing,klicpera_dimenet_2020, liu2022spherical} incorporates
2-hop information in one single layer but $(n+1)$-hop when stacking $n$ layers.
However,
stacking $n$ proposed message passing layers is already able to incorporate information from $n$-hop neighborhood.
Obviously, compared with a network containing several DMP layers, a network with the same number of our
proposed message passing layers merely hurts the model expressiveness a bit, but
significantly improves the model efficiency.

\textbf{Efficiency via Less Torsion Angles:}
% In addition, our method achieves completeness by computing $O(nk)$ torsion angles, which is efficient especially 
% compared with existing methods~\citep{klicpera2021gemnet,adams2022learning,ganea2021geomol} that compute $O(nk^3)$ torsion angles. Detailed analysis is provided in Appendix~\ref{sec:efficiency_less_angle}.
In addition, our method achieves completeness by
computing $O(nk)$ torsion angles, which is efficient especially 
compared with methods including~\citet{klicpera2021gemnet,adams2022learning,ganea2021geomol}.
Given a scenario where a global region is the union of two local regions with $n_1$ and $n_2$ nodes.
As introduced in Sec.~\ref{sec:global} and Sec.~\ref{sec:local}, our method computes the same number of torsion angles as nodes in a local region,
then employs a rotation angle (torsion angle essentially). Hence, the number of torsion angles that our method computes is $n_1+n_2+1$.
\citet{ganea2021geomol} computes a torsion angle
based on one pair of nodes, each of which is from a separate local region.
Hence, the number of torsion angles needed is $n_1\times n_2$.
Apparently, our method reduces the number of torsion angles from $O(nk^3)$ to $O(nk)$,
which is significant considering the computing of torsion is excessively expensive.

\section{ComENet} \label{sec:network}

\begin{wrapfigure}[22]{r}{0.6\textwidth}\vspace{-25 pt}
    \centering
    \includegraphics[width=0.6\textwidth]{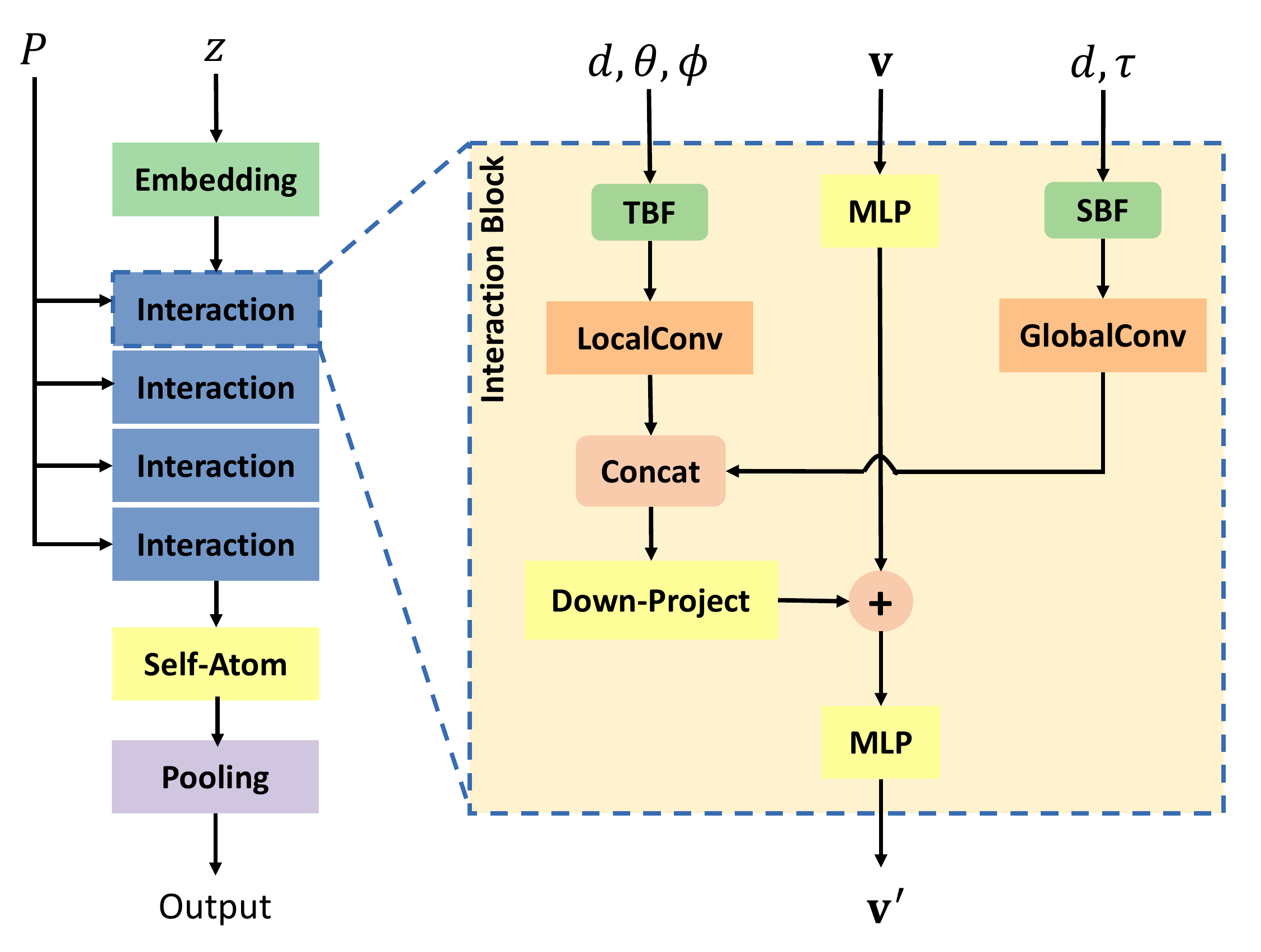}
    \vspace{-18 pt}
    \caption{Illustration of ComENet with an overview (left) and the interaction layer (right).
    TBF and SBF denote the basis functions for tuples $(d,\theta,\phi)$ and $(d,\tau)$.
    LocalConv and GlobalConv denote the proposed local and global convolution layers.
    Concat is the concatenation operation and Down-Project is a linear layer to reduce feature dimensions.
    + denotes the element-wise sum operation.
    }\label{fig:model}
    \vspace{-20 pt}
\end{wrapfigure}

Based upon the message passing scheme introduced in Sec.~\ref{sec:mp}, 
we propose the complete and efficient graph neural network (ComENet) as shown in Fig.~\ref{fig:model}. 
\textcolor{COLOR}{Existing invariant 3D GNN methods~\cite{schmidhuber2015deep,klicpera_dimenet_2020,klicpera_dimenetpp_2020,liu2022spherical,klicpera2021gemnet,schutt2021equivariant}
share the similar architecture pipeline, which contains an input block, several interaction blocks, and an output block.
Our ComENet also follows such architecture fashion along with several novel components, such as self-atom layers and specifically designed local and global graph convolution layers, to better fulfill our proposed message passing scheme in Eq.~\ref{eq:mp}.}
Generally, ComENet consists of an embedding layer, multiple interaction layers, a self-atom layer, and a pooling layer. To be in line with the message passing scheme in Sec.~\ref{sec:mp},
we take the updating process for node $i$'s feature vector $\textbf{v}_i$
as an example to describe the network. \textcolor{COLOR}{In practice, feature vectors for all the nodes in a graph are updated simultaneously.}
In particular, we omit all indices in Fig.~\ref{fig:model} and below for clear presentation.
Specifically, \textit{embedding Layer} converts atom type z to an initial node feature vector \textcolor{COLOR}{$\textbf{v}$} via learnable atom type embeddings~\cite{schutt2017schnet,klicpera_dimenet_2020}.
\textit{Interaction Layer} updates node feature vector $\textbf{v}$ based on features of the neighboring nodes and geometric features $(d,\theta,\phi,\tau)$ in Eq.~\ref{eq:mp} using the local and global graph convolution layers. A detailed description of interaction layer is provided in Appendix~\ref{sec:model architecture}.
\textit{Self-atom layer} is used to update each node feature and project the feature dimension into 1. 
And \textit{pooling layer} is a sum-pooling performing on all node features to obtain final predictions.

\section{Related Work}

We consider how to represent 3D information in 3D molecular graphs~\citep{atz2021geometric,bronstein2021geometric}.
One category of methods are \textit{equivariant 3D GNNs} that directly use coordinates in the CSC as inputs to  networks~\citep{thomas2018tensor,anderson2019cormorant,fuchs2020se,schutt2021equivariant,batzner2021se}. 
These methods are efficient but suffer from several setbacks.
Firstly, each network component needs to be carefully designed to be rotation equivariant of input graphs. 
Secondly, the reason why this category of methods are efficient lies in that they only use the type-1 basis in Spherical harmonics, which is an approximation essentially. It is proved in ~\citep{thomas2018tensor} that theoretically, $l$ is infinite in terms of type-$l$ basis. In practice, $l$ should be at least 2 for achieving satisfactory performance. Type-1 basis essentially coerces the conv kernel to be in a narrow learning space, which imposes a hard constraint on the network capability.
However, when using type-2 basis, the conv kernel can be more expressive while the efficiency issue
would emerge as a new bottleneck.
Thirdly, the performance of such equivariant GNNs is
shown to be worse than invariant 3D GNNs~\citep{liu2022spherical}.

In this work, we follow \textit{invariant 3D GNNs} to inherit the merit of SE(3)-invariance by investigating
relative 3D information, which is used in both representation learning tasks
~\citep{schutt2017schnet,klicpera_dimenet_2020,klicpera_dimenetpp_2020,shuaibi2021rotation,liu2022spherical,klicpera2021gemnet}
and coordinates generation tasks~\citep{ganea2021geomol,Simm2020Reinforcement,Simm2021SymmetryAware,xu2021molecule3d,jumper2021highly,baek2021accurate}.
Since the relative information is SE(3)-invariant of input 3D graphs, the employed networks favorably achieve the invariance merit.
We focus on the 3D graph learning problem and existing methods either capture partial 3D information or suffer from high computational cost.
For example, SchNet~\cite{schutt2017schnet} only considers distance information and  DimeNet~\cite{klicpera_dimenet_2020} further incorporates angles between bonds. They both integrate incomplete 3D information that the network capacity is limited in practice.
SphereNet~\cite{liu2022spherical} generates approximate complete 3D representations by using
distance, angle, and torsion information but the complexity is $O(nk^2)$.
GemNet~\cite{klicpera2021gemnet} is based on quadruplets of nodes, which is more expensive. Our objective is to build a fully complete 3D graph net with a much lower computational budget.

There also exist some methods in literature using \textit{both absolute and relative 3D information}~\cite{godwin2021very,hu2021forcenet,ying2021do}. 
In this work, we only use relative information as input to avoid inherent limitations of equivariant 3D GNNs.
Moreover, we design a novel message passing such that the computational cost
is comparable with that of equivariant 3D GNNs.

\begin{wraptable}[7]{r}{0.56\textwidth}\vspace{-10 pt}
\begin{center}
\vspace{-10pt}
\caption{Statistics of the datasets.}
\vspace{-5pt}
\label{tb:datasets}
\resizebox{0.56\textwidth}{!}
% \small
{\begin{tabular}{l|ccc}\toprule
Dataset &OC20 &Molecule3D &QM9 \\
\midrule
\# Graphs &660,010 &3,899,647 &130,831 \\
Split Type &Pre-defined &Random/Scaffold &Random \\
Split Ratio &70:15:15 &6:2:2 &84:8:8 \\
\# Nodes/Graph &77.75 &29.11 &18.02 \\
\bottomrule
\end{tabular}}
\end{center}
\vspace{-10pt}
\end{wraptable}

\section{Experiments}  \label{sec:exp}

We examine the power and efficiency
of ComENet on two large-sacle datasets including 
Open Catalyst 2020 (OC20)~\cite{chanussot2021open} and
Molecule3D~\cite{xu2021molecule3d}, 
and the mostly commonly used datastet QM9~\cite{ramakrishnan2014quantum}.
The statistics of three datasets are provided in Table~\ref{tb:datasets}. Detailed descriptions of the datasets are provided in Appendix~\ref{sec:data}. 
In particular, the Molecule3D contains about 4 million 3D molecular graphs,
and the OC20 has the average graph size of 77.75.
Baseline methods include GIN-Virtual~\cite{hu2021ogb}, 
CGCNN~\cite{xie2018crystal}, 
SchNet~\cite{schutt2017schnet}, 
PhysNet~\cite{unke2019physnet}, 
MGCN~\cite{lu2019molecular}, 
DimeNet~\cite{klicpera_dimenet_2020}, 
DimeNet++~\cite{klicpera_dimenetpp_2020}, 
SphereNet~\cite{liu2022spherical}, 
PaiNN~\cite{schutt2021equivariant}, 
GemNet~\cite{klicpera2021gemnet}.
Unless otherwise specified, the values for baseline methods are taken from the referred papers.
For the ComENet, we use data loader in the PyTorch Geometric library~\cite{Fey/Lenssen/2019} to load the datasets.
All the models are trained using the Adam optimizer~\cite{kingma2014adam} and the optimal hyperparameters
are obtained on validation sets using grid search. 
Experimental setup and search space for all
models are provided in Appendix~\ref{sec:setup}. 
Code is integrated in the DIG library~\cite{liu2021dig} and available at~\url{https://github.com/divelab/DIG}.

\begin{table*}[t]
    \begin{center}
    % \vspace{-15 pt}
        \caption{Results on IS2RE including computing cost in training\&inference
        and performance in terms of
        energy MAE and the percentage of EwT of the ground truth energy. 
        Training time is the average time per epoch during training using 1 GPU.
        Performance is reported for models trained on the All training dataset.
        The best performance is shown in bold and the second best is shown with underlines.
        }
        % \vspace{-5 pt}
    \label{tb:result_oc20}
    \resizebox{\textwidth}{!}
    {\begin{tabular}{l cc | ccccc | ccccc  }
    \toprule
    & \multicolumn{2}{c|}{Time } &\multicolumn{5}{c|}{Energy MAE [eV] $\downarrow$} & \multicolumn{5}{c}{EwT $\uparrow$}  \\
    \cmidrule(l{4pt}r{4pt}){2-3}
    \cmidrule(l{4pt}r{4pt}){4-8}
    \cmidrule(l{4pt}r{4pt}){9-13}
 Model &Train &Infer. & ID &  OOD Ads & OOD Cat & OOD Both &Average& ID &  OOD Ads & OOD Cat & OOD Both &Average\\
\midrule
CGCNN &18min &1min &0.6203 &0.7426 &0.6001 &0.6708 &0.6585 &3.36\% &2.11\% &3.53\% &2.29\% &2.82\% \\
SchNet &10min &1min &0.6465 &0.7074 &0.6475 &0.6626 &0.6660 &2.96\% &2.22\% &3.03\% &2.38\% &2.65\% \\
DimeNet++ &230min &4min &0.5636 &0.7127 &0.5612 &0.6492 &0.6217 &4.25\% &2.48\% &4.40\% &2.56\% &3.42\% \\
GemNet-T &200min &4min &\underline{0.5561} &0.7342 &0.5659 &0.6964 &0.6382 &\underline{4.51\%} &2.24\% &4.37\% &2.38\% &3.38\% \\
SphereNet &290min &5min &0.5632 &\underline{0.6682} &\underline{0.5590} &\underline{0.6190} &\underline{0.6023} &\textbf{4.56\%} &\underline{2.70\%} &\textbf{4.59\%} &\underline{2.70\%} &\textbf{3.64\%} \\
ComENet &20min &1min &\textbf{0.5558} &\textbf{0.6602} &\textbf{0.5491} &\textbf{0.5901} &\textbf{0.5888} &4.17\% &\textbf{2.71\%} &\underline{4.53\%} &\textbf{2.83\%} &\underline{3.56\%} \\
\bottomrule
\end{tabular}}
\end{center}
% \vspace{-10 pt}
\end{table*}

\subsection{OC20} \label{sec:exp_oc20}
The Open Catalyst 2020 (OC20) dataset is a newly released large-scale dataset with millions of DFT relaxations to model and discover catalysts.
In this work, we focus on Initial Structure to Relaxed Energy (IS2RE) task, which is the most common task in catalyst discovery.
Descriptions of the data and tasks are provided in Appendix~\ref{sec:data}. 
The ground truth of the test set is not publicly available, 
therefore, we compare the results of different methods on the validation set. 
The evaluation metrics include the energy MAE and the percentage of Energies within a Threshold (EwT) of the ground truth energy. 
The values for the baseline methods are taken from~\citet{chanussot2021correction, liu2022spherical}.
Notably, we aim to predict relaxed energy directly from initial structure and do not compare with some methods using relaxation~\cite{chanussot2021open}, trajectory information, or relaxed structures.
Using relaxation~\citep{chanussot2021open,klicpera2021gemnet,shuaibi2021rotation} is computationally expensive during prediction 
while the relaxation trajectory and relaxed structures~\citep{godwin2021very,ying2021do} are hard to obtain in practice.

Table~\ref{tb:result_oc20} shows that ComENet outperforms all the baseline methods in terms of energy MAE, 
which is also used as the main evaluation metric in the Open Catalyst Challenge~\cite{occhallenge}. 
ComENet achieves best performance on two splits and the second best on the other two splits
in terms of EwT.
Specifically, ComENet reduces the average energy MAE by 0.0135, which is 2.2\% of the second best model. 
Note that ComENet achieves the best results on the OOD Both split in terms of both energy MAE and EwT.
In practice, it is common that test data is in the different domain with the training data.
Hence, OOD Both can test the generalization capability of learning models.
More importantly, ComENet is much more efficient than methods like DimeNet++ and SphereNet. 
For example, SphereNet needs 5 hours per epoch while ComENet only requires 20 minutes using the same computing infrastructure (NVIDIA RTX A6000 48GB). Overall, compared with existing best methods,
ComENet achieves better performance and largely reduces training by at least 10 times.

\begin{wraptable}[]{r}{0.6\textwidth}
\vspace{-15 pt}
\begin{center}
\caption{Comparisons between ComENet and other models in terms of computing cost and HOMO-LUMO gap MAE on Molecule3D for both random and scaffold splits. Train time is the average training time per epoch.}
\label{tb:result_molecule3d}
% \resizebox{0.48\textwidth}{!}
\small
{\begin{tabular}{lcc|cc}\toprule
&\multicolumn{2}{c|}{Time} 
&\multicolumn{2}{c}{MAE}  \\
\cmidrule(l{4pt}r{4pt}){2-3}
\cmidrule(l{4pt}r{4pt}){4-5}
% \midrule
Model  &Train &Inference &Random &Scaffold \\
\midrule
GIN-Virtual &15min &2min &0.1036  &0.2371 \\
SchNet &14min &3min &0.0428 &0.1511 \\
DimeNet++ &133min &16min &0.0306 &0.1214 \\
SphereNet &182min &28min &0.0301  &0.1182 \\
ComENet &22min &3min &0.0326 &0.1273 \\
\bottomrule
\end{tabular}}
\end{center}
\vspace{- 8 pt}
\end{wraptable}

\subsection{Molecule3D} \label{sec:exp_molecule3d}

The Molecule3D dataset~\cite{xu2021molecule3d} is a newly proposed large-scale dataset, including around 4 million molecules with precise ground-state 3D information derived from DFT and molecular properties. 
We focus on the prediction of the HOMO-LUMO gap as it is one of the most practically-relevant quantum chemical properties of molecules. A detailed description of the Molecule3D dataset is provided in Appendix~\ref{sec:data}.
As this is a newly proposed dataset, we run baseline methods including GIN-Virtual~\cite{hu2021ogb}, SchNet~\cite{schutt2017schnet}, DimeNet++\cite{klicpera_dimenetpp_2020} and SphereNet~\cite{liu2022spherical},
among which GIN-Virtual is a powerful baseline for 2D graphs while the others are for 3D graphs. 
All the models are trained using the same computing infrastructure (Nvidia GeForce RTX 2080 Ti 11GB).

\begin{wrapfigure}[]{l}{.45\textwidth}
    \centering
    % \vspace{-18 pt}
    \includegraphics[width=.45\textwidth]{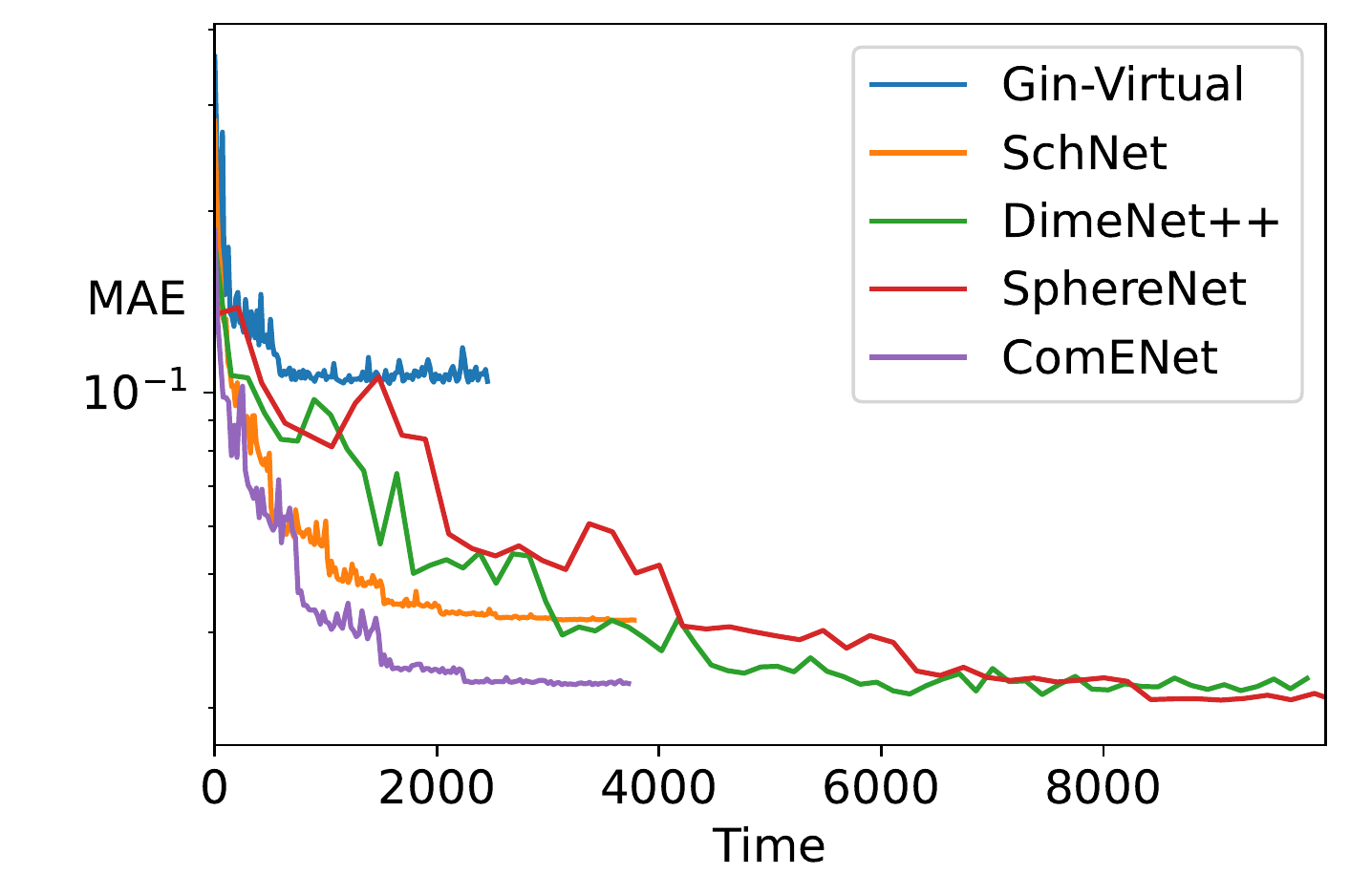}
    \vspace{-15 pt}
    \caption{Total training time for different methods on Molecule3D.
    }\label{fig:time}
    \vspace{-15 pt}
\end{wrapfigure}

Table~\ref{tb:result_molecule3d} shows that
ComENet dramatically reduces training time by 6-9 times compared with DimeNet++ and SphereNet, 
and costs similar time as GIN-virtual and SchNet that only considers distance information. 
In terms of performance, our ComENet performs much better than SchNet for both random and scaffold splits with similar time and computing costs, 
but a little worse than DimeNet++ and SphereNet. 
This may be due to the molecules are relatively small in Molecule3D compared with OC20, 
the structures that our complete strategy can distinguish may not exist in the dataset. 
However, considering the comparable performance and high efficiency,
our ComENet is more practically useful than other methods on such large datasets.
In addition, Fig.~\ref{fig:time} also shows ComENet either converges much faster in terms of total training time or performs much better compared with other baselines.

\subsection{QM9} \label{sec:exp_qm9}

\begin{table*}[t]
\begin{center}
\caption{Comparisons between ComENet and other models
    in terms of MAE and the overall mean std. MAE on QM9.
}\label{tb:result_qm9}
\resizebox{\textwidth}{!}
% \small
{\begin{tabular}{llcccccccc}
\toprule
Property &Unit &SchNet &PhysNet &MGCN &DimeNet &DimeNet++ &PaiNN &SphereNet &ComENet \\
\midrule
$\mu$                  &D      &0.033 &0.0529 &0.0560 &0.0286 &0.0297 &0.012 &0.0245 & 0.0245\\
$\alpha$            &${a_0}^3$ &0.235 &0.0615 &0.0300 &0.0469 &0.0435 &0.045 &0.0449 & 0.0452\\
$\epsilon_\text{HOMO}$ &meV    &41    &32.9   &42.1   &27.8   &24.6   &27.6   &22.8  & 23.1\\
$\epsilon_\text{LUMO}$ &meV    &34    &24.7   &57.4   &19.7   &19.5   &20.4   &18.9  & 19.8\\
$\Delta\epsilon$       &meV    &63    &42.5   &64.2   &34.8   &32.6   &45.7   &31.1  & 32.4\\
$\left< R^2 \right>$ &${a_0}^2$ &0.073 &0.765  &0.110  &0.331  &0.331  &0.066  &0.268 & 0.259\\
ZPVE  &meV    &1.7   &1.39   &1.12   &1.29   &1.21   &1.28   &1.12  & 1.20\\
$U_0$    &meV    &14    &8.15   &12.9   &8.02   &6.32   &5.85   &6.26  & 6.59\\
$U$     &meV    &19    &8.34   &14.4   &7.89   &6.28   &5.83   &6.36  & 6.82\\
$H$     &meV    &14    &8.42   &14.6   &8.11   &6.53   &5.98   &6.33  & 6.86\\
$G$     &meV    &14    &9.4    &16.2   &8.98   &7.56   &7.35   &7.78  & 7.98\\
$c_\text{v}$ &$\frac{\mbox{cal}}{\mbox{mol K}}$ &0.033 &0.028  &0.038  &0.025  &0.023  &0.024  &0.022 & 0.024\\
\midrule
std. MAE &\%  &1.76  &1.37   &1.86   &1.05   &0.98   &1.01   &0.91  &0.93 \\
\bottomrule
\end{tabular}}
\end{center}
\vspace{-10 pt}
\end{table*}

The QM9 dataset is a widely used dataset for predicting various properties of molecules. 
The evaluation metrics include the MAE for each property and the overall mean standardized MAE (std. MAE) for all the 12 properties. A detailed description of the dataset is provided in Appendix~\ref{sec:data}.
% Comparison results are provided in Table~\ref{tb:result_qm9} of Appendix~\ref{sec:results_appendix}.
Notably, we do not list results for PPGN~\cite{maron2019provably} and Cormorant~\cite{anderson2019cormorant} since they use different train/val/test sizes. 
% \textcolor{COLOR}{We use the same split as DimeNet and SphereNet and provide extensive explanations in Appendix~\ref{sec:results_appendix}.}
Table~\ref{tb:result_qm9} shows that ComENet 
is much better than the methods operating in 1-hop neighborhood like SchNet, PhyNet, and MGCN.
Compared with DimeNet, DimeNet++, PaiNN, and SphereNet,
ComENet achieves similar results on all properties
and the overall std. MAE.
Consistently, compared with methods operating in 2-hop neighborhood
like DimeNet, DimeNet++, and SphereNet, ComENet is much more efficient.

\subsection{Ablation Study for Identifying Conformers}

\begin{wrapfigure}[10]{r}{0.61\textwidth}
\vspace{-17 pt}
     \centering
     {\includegraphics[width=0.3\textwidth]{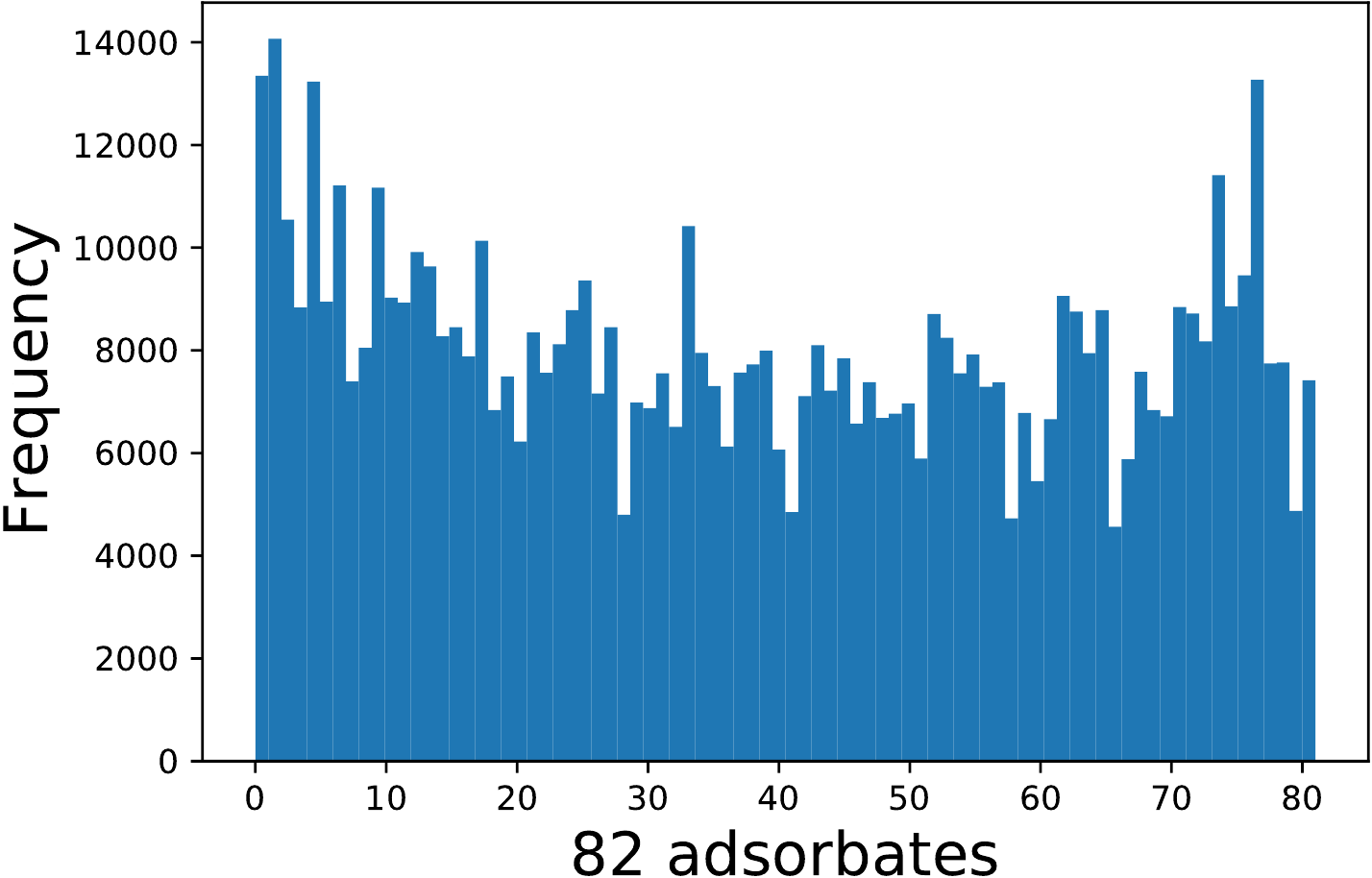}\label{fig:ads}}
     {\includegraphics[width=0.3\textwidth]{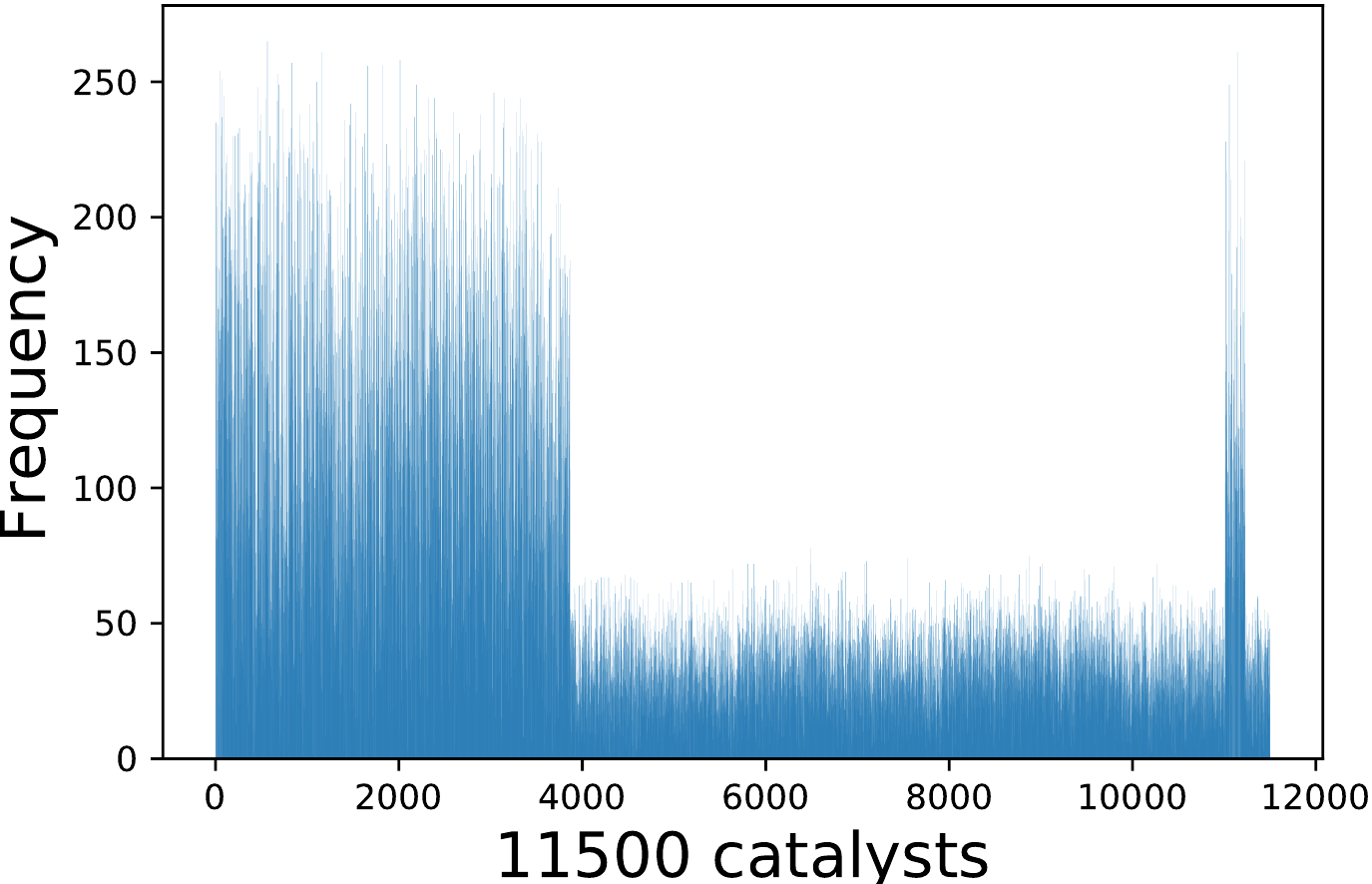}\label{fig:bulk}}
     \vspace{-12 pt}
    \caption{Distributions of adsorbates and catalysts in OC20.
    For y-axis, frequency counts the number of conformers for each individual adsorbate and catalyst.}
    \label{fig:oc_analysis}
    \vspace{-10 pt}
\end{wrapfigure}

\textcolor{COLOR}{
As mentioned in Sec.~\ref{sec:glob_conformer}, rotation angles are the main difference between different conformers~\citep{ganea2021geomol, jing2022torsional}. We investigate the contribution of our proposed rotation angles $\tau$ to demonstrate the effectiveness of our global complete representations. We conduct experiments on the OC20 dataset since there exist different conformers for molecules in this dataset. Specifically, there are 660,010 data samples in the OC20 dataset (IS2RE), where each sample is a combination of two parts, namely, adsorbate and catalyst.
There are 82 adsorbates and 11,500 catalysts used in the datasets. Each adsorbate inevitably corresponds to different conformers in the dataset, and it is similar to catalyst.
We show the number of conformers for each adsorbate and each catalyst in Fig.~\ref{fig:oc_analysis}.
We remove the rotation angle $\tau$ from ComENet and denote it as "ComENet w/o $\tau$". The results in Table~\ref{tb:ablation_oc20} show that removing rotation angles $\tau$ can harm the performance of ComENet, demonstrating the effectiveness of our global complete representations for identifying conformers. 
% Additional ablation studies on QM9 are in Appendix~\ref{sec:results_appendix}. 
}

\begin{table*}[ht]
\vspace{-5 pt}
    \begin{center}
        \caption{Comparisons between ComENet and the model without rotation angles $\tau$ on OC20.
        }
        \vspace{-5 pt}
    \label{tb:ablation_oc20}
    \resizebox{\textwidth}{!}
    % \small
    {\color{COLOR}\begin{tabular}{l | ccccc | ccccc  }
    \toprule
    &\multicolumn{5}{c|}{Energy MAE [eV] $\downarrow$} & \multicolumn{5}{c}{EwT $\uparrow$}  \\
    \cmidrule(l{4pt}r{4pt}){2-6}
    \cmidrule(l{4pt}r{4pt}){7-11}
    % \midrule
 Model & ID &  OOD Ads & OOD Cat & OOD Both &Average& ID &  OOD Ads & OOD Cat & OOD Both &Average\\
\midrule
ComENet &\textbf{0.5558} &\textbf{0.6602} &\textbf{0.5491} &\textbf{0.5901} &\textbf{0.5888} &\textbf{4.17\%} &\textbf{2.71\%} & \textbf{4.53\%} &\textbf{2.83\%} & \textbf{3.56\%} \\
ComENet w/o $\tau$ & 0.5585 & 0.6851 & 0.5574 & 0.6186 & 0.6049 & 4.13\% & 2.65\% & 4.13\% & 2.75\% & 3.42\% \\
\bottomrule
\end{tabular}}
\end{center}
\vspace{-10 pt}
\end{table*}

\section{Conclusions, Limitations, Outlook, and Societal Impacts}
3D information is crucial for 3D molecular graph learning.
Existing methods either learn partial 3D information or induce high time complexity.
We propose ComENet that is both complete in incorporating 3D information and efficient 
with time complexity of $O(nk)$.
Particularly, we propose the novel rotation angles to fulfill global completeness.
ComENet can generalize to large-scale datasets,
accelerating training and inference by 6-10 times with superior or comparable performance.
\textcolor{COLOR}{Even though ComENet is the first complete and the most efficient 3D GNN model, there exists one major limitation, which is not only for
ComENet but also for existing 3D GNNs. Basically, existing 3D GNN models
are centered on the fact that 3D information is given in data.
However, acquiring 3D information itself is difficult and expensive in practice.
Current methods rely on experiments or
DFT-based computing, which is extremely time-consuming. 
It would be significant that machine
learning models can be developed to tackle this problem.
Looking forward, we can derive two directions for fulfilling such objective.
Firstly, we can study to generate 3D graphs either from 2D graphs or from scratch by developing generative models,
such as VAE, flow and diffusion models.
Especially, in some real-world applications like drug discovery, 2D molecules are usually not given.
This raises the need of developing new generation methods from scratch.
Secondly, we can target at a research case where we have a minimal set of training data with
3D information, but the vast unseen data or new data lack such 3D information.
We may develop novel contrastive learning components to force correspondence and consistency 
between 2D graphs and their 3D geometric views,
then integrate such components into end-to-end learning systems for
application based on 2D graph data.
ComENet can facilitate a plethora of important real-world applications, such as drug discovery and material discovery.
It can be used in several research domains including quantum chemistry and physics, material sciences, molecular dynamics simulations, etc.
Any negative societal impact associated with those applications and domains can be applied to our method.}

\begin{ack}
This work was supported in part by National Science Foundation grant IIS-1908220 and National Institutes of Health grant U01AG070112.
\end{ack}

\bibliography{deep}
\bibliographystyle{plainnat} %unsrt

\newpage
\appendix

\section{Appendix}

\subsection{Algorithm for the Proposed Message Passing} \label{sec:alg}

\begin{algorithm}[H]
% \tiny
\caption{The Proposed Message Passing}\label{alg:alg}
\begin{algorithmic}[1]
\FOR{$i=1,..., n$}
\FOR{$j=1,..., k$}
\STATE Compute $d_{ij}=||\mathrm{\textbf{p}}_i-\mathrm{\textbf{p}}_j||_2$
\ENDFOR
\STATE Get reference nodes via
\vspace{-6pt}
$$f_i=\mathrm{argmin}_{k\in\mathcal{N}_i} (d_{ik}), s_i=\mathrm{argmin}_{k\in\mathcal{N}_i \backslash \{f_{i}\}} (d_{ik})$$
\vspace{-16pt}
\ENDFOR
\FOR{$i=1,..., n$}
\FOR{$j=1,..., k$}
\STATE Get reference nodes via
\vspace{-6pt}
$$f_{i\backslash j}= 
\begin{cases}
    f_i,& \text{if } f_i \neq j\\
    s_i,              & \text{otherwise}
\end{cases},
f_{j\backslash i}= 
\begin{cases}
    f_j,& \text{if } f_j \neq i\\
    s_j,              & \text{otherwise}
\end{cases}
$$
\vspace{-10pt}
\STATE Compute angles via
\vspace{-6pt}
\begin{equation*}
\begin{aligned}
    \theta_{ij} &= \mathrm{angle}_1(f_i,i,j), \\
    \phi_{ij} &= \mathrm{angle}_2(\mathrm{plane}_{f_i,i,s_i}, \mathrm{plane}_{f_i,i,j}), \\
    \tau_{ij} &= \mathrm{angle}_3(\mathrm{plane}_{f_{i\backslash j},i,j}, \mathrm{plane}_{i,j,f_{j\backslash i}}), \\
\end{aligned}
\end{equation*}
\vspace{-15pt}
\ENDFOR
\STATE Update node features via Eq.~\ref{eq:mp}
% $$\textbf{v}^{\prime}_i = g\left(\textbf{v}_i, \sum_{j\in\mathcal{N}_i}f\left(\textbf{v}_j, d_{ij}, \theta_{ij}, \phi_{ij}, \tau_{ij}\right)\right)$$
\ENDFOR
\end{algorithmic}
\end{algorithm}

As rigorously shown in Algorithm~\ref{alg:alg} , there are two nested loops in the message passing.
For each nested loop, the complexity for the outer loop is $O(n)$, and the complexity for the inner loop is $O(k)$.
Particularly, within the inner loop,
the operations are picking proper reference nodes, thus the complexity is simply $O(1)$. 
Overall, the total complexity of our message passing is $O(nk)$.
Importantly, the efficiency of our method is also demonstrated in Sec.~\ref{sec:exp} that ComeNet is 6-10 times faster than SphereNet.
Actually, the training time of ComENet is similar to SchNet whose complexity is also $O(nk)$.

\subsection{SE(3)} \label{SE(3)}
\textcolor{COLOR}{
SE(3) is the Special Euclidean group in 3 dimensions, including all rotations and translations in 3D. It is the set of $4\times 4$ real matrices of the form 
\begin{equation}
    \begin{bmatrix}
    Rot & \textbf{t} \\
    0 & 1 
    \end{bmatrix}
    =
    \begin{bmatrix}
    r_{11} & r_{12} & r_{13} & t_1\\
    r_{21} & r_{22} & r_{23} & t_2\\
    r_{31} & r_{32} & r_{33} & t_3\\
    0 &0 &0 &1
    \end{bmatrix},
\end{equation}
where $Rot\in \mathrm{SO(3)}$ and $\textbf{t}\in \mathbb{R}^{3}$. 
The SO(3) is the set of $3\times 3$ real matrices $Rot$ satisfying 
$Rot^TRot=I$ and $\det(Rot)=1$.
}

\subsection{Proofs of Lemma~\ref{lemma1}} \label{sec:proof}

\captionsetup[subfigure]{labelformat=empty}
\begin{figure*}[ht]
     \centering
     \subfloat[Case (1): $i$ has more than one neighboring nodes $c_{1,2,...}$]
     {\includegraphics[width=0.4\textwidth]{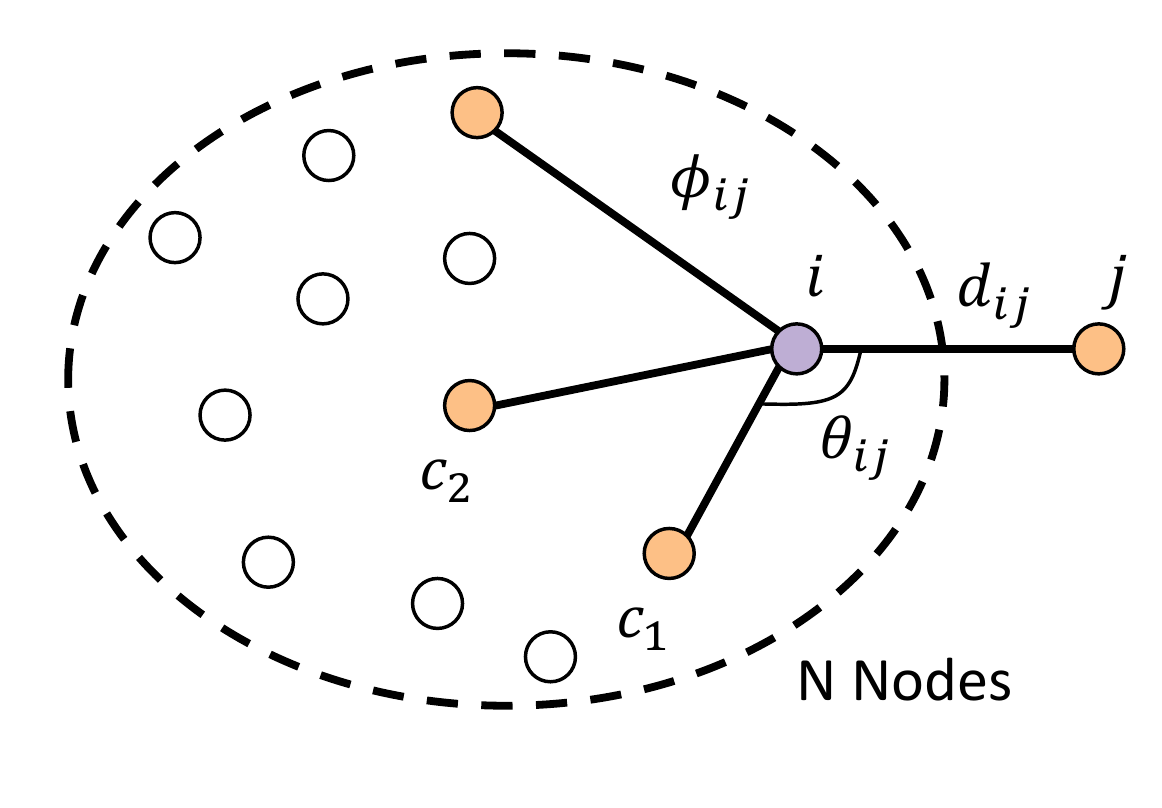}\label{fig:case1}}
     \qquad
     \subfloat[Case (2): $i$ only has one neighboring node $c_1$]
     {\includegraphics[width=0.4\textwidth]{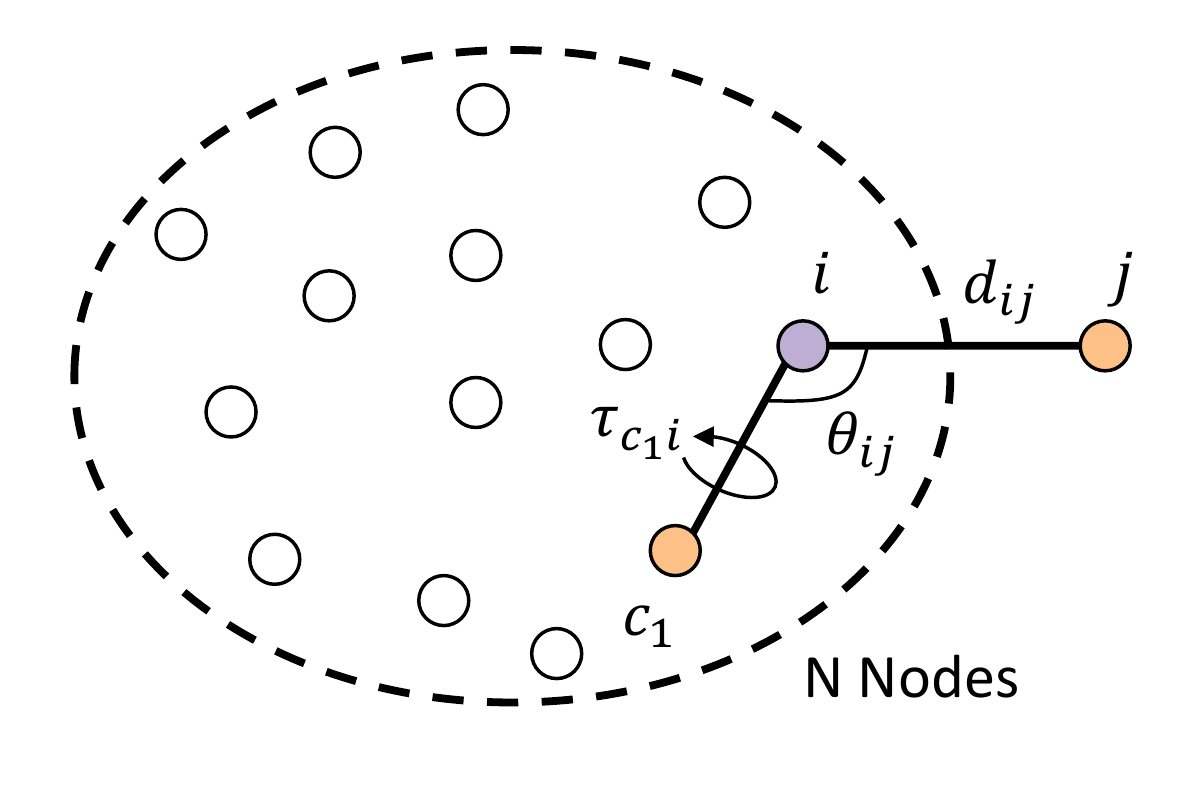}\label{fig:case2}}
    \caption{Two cases for proving Lemma~\ref{lemma1} when generalizing the size of a 3D graph from $k$ to $k+1$.}
    \label{fig:n_to_n+1}
    \vspace{-10pt}
\end{figure*}

\begin{proof}
We employ contradiction to prove it. Basically, there exist two cases when adding this new node $j$,
as illustrated in Fig.~\ref{fig:n_to_n+1} and described as following.
\emph{Case (1): $i$ has more than one neighboring nodes $c_{1,2,...}$;
Case (2): $i$ only has one neighboring node $c_1$.}

Generally, in case (1), we need to prove $\textbf{p}_{ij}$ is uniquely defined as $(d_{ij}, \theta_{ij}, \phi_{ij})$; in case (2), we need to prove $\textbf{p}_{ij}$ is uniquely defined as $(d_{ij}, \theta_{ij}, \tau_{c_1i})$.
We assume there exists another location for $j$ such that there is a different relative location vector
$\widetilde{\textbf{p}}_{ij}$, and this assumption leads to contradiction.
The proofs of both cases are provided below.
\end{proof}

\emph{Proof of case (1)}:
In case (1), we need to prove $\textbf{p}_{ij}$ is uniquely defined as $(d_{ij}, \theta_{ij}, \phi_{ij})$.
Based on the notations defined in Sec.~\ref{sec:local}, we have
\begin{equation}
\begin{aligned}
    \left<\textbf{p}_{ij}, \textbf{p}_{ij} \right> = d^2_{ij}, \\
    \left<\textbf{p}_{if_i}, \textbf{p}_{ij} \right> = d_{if_i}d_{ij}cos\theta_{ij}, \\
    \left<T(\textbf{p}_{is_i}), T(\textbf{p}_{ij}) \right> = \Vert T(\textbf{p}_{is_i})\Vert \Vert T(\textbf{p}_{ij})\Vert cos\phi_{ij}, \\
    \left<T(\textbf{p}_{is_i})\times T(\textbf{p}_{ij}),  \textbf{p}_{if_i} \right> =d_{if_i} \Vert T(\textbf{p}_{is_i})\Vert \Vert T(\textbf{p}_{ij})\Vert sin\phi_{ij}, \\
    \label{eq:case1_1}
\end{aligned}
\end{equation}
where $T$ denotes an operator of projection to 
the plane perpendicular to $\textbf{p}_{if_i}$.
Apparently, all the quantities on the right side are known based on 
Sec.~\ref{sec:local}. 
Assume the solution set
contains at least two different solutions $\textbf{p}_{ij}$
and $\widetilde{\textbf{p}}_{ij}$.
Eq.~\ref{eq:case1_1} can simply imply that
\begin{equation}
\begin{aligned}
    \left<\textbf{p}_{if_i}, \textbf{p}_{ij}- \widetilde{\textbf{p}}_{ij} \right> = 0, \\
    \left<T(\textbf{p}_{is_i}), \textbf{p}_{ij}- \widetilde{\textbf{p}}_{ij} \right> = 0, \\
    \left<T(\textbf{p}_{is_i}) \times (\textbf{p}_{ij}- \widetilde{\textbf{p}}_{ij}), \textbf{p}_{if_i}\right> = 0. \\
\label{eq:case1_2}
\end{aligned}
\end{equation}
Apparently, both $T(\textbf{p}_{is_i})$ and 
$\textbf{p}_{ij}- \widetilde{\textbf{p}}_{ij}$
are in the plane perpendicular to $\textbf{p}_{if_i}$.
Hence, $T(\textbf{p}_{is_i}) \times (\textbf{p}_{ij}- \widetilde{\textbf{p}}_{ij}) = \alpha \textbf{p}_{if_i}$
holds for some $\alpha\neq0$.
However, from Eq.~\ref{eq:case1_2}, we have
\begin{equation}
\begin{aligned}
    \left<T(\textbf{p}_{is_i}) \times (\textbf{p}_{ij}- \widetilde{\textbf{p}}_{ij}), \textbf{p}_{if_i}\right>  = \alpha\left<\textbf{p}_{if_i}, \textbf{p}_{if_i} \right> = 0. \\
\label{eq:case1_3}
\end{aligned}
\end{equation}
Since $\alpha\neq0$ and $\textbf{p}_{if_i}\neq\textbf{0}$, Eq.~\ref{eq:case1_3} causes a contradiction. Thus, the solution set contains a unique solution.

\emph{Proof of case (2)}:
In case (2), we need to prove $\textbf{p}_{ij}$ is uniquely defined by $(d_{ij}, \theta_{ij}, \tau_{c_1i})$.
Based on the notations defined in Sec.~\ref{sec:local} and Sec.~\ref{sec:global}, we have
\begin{equation}
\begin{aligned}
    \left<\textbf{p}_{ij}, \textbf{p}_{ij} \right> = d^2_{ij}, \\
    \left<\textbf{p}_{if_i}, \textbf{p}_{ij} \right> = d_{if_i}d_{ij}cos\theta_{ij}, \\
    \left<T(\textbf{p}_{f_if_{f_i\backslash i}}), T(\textbf{p}_{ij}) \right> = \Vert T(\textbf{p}_{f_if_{f_i\backslash i}})\Vert \Vert T(\textbf{p}_{ij})\Vert cos\tau_{c_1i}, \\
    \left<T(\textbf{p}_{f_if_{f_i\backslash i}}) \times T(\textbf{p}_{ij}),  \textbf{p}_{if_i} \right> =d_{if_i} \Vert T(\textbf{p}_{f_if_{f_i\backslash i}})\Vert \Vert T(\textbf{p}_{ij})\Vert sin\tau_{c_1i}, \\
    \label{eq:case2_1}
\end{aligned}
\end{equation}
where $T$ still denotes an operator of projection to 
the plane perpendicular to $\textbf{p}_{if_i}$.
Similarly, all the quantities on the right side are known based on 
Sec.~\ref{sec:local} and Sec.~\ref{sec:global}. 
Assume the solution set
contains at least two different solutions $\textbf{p}_{ij}$
and $\widetilde{\textbf{p}}_{ij}$.
Eq.~\ref{eq:case2_1} can simply imply that
\begin{equation}
\begin{aligned}
    \left<\textbf{p}_{if_i}, \textbf{p}_{ij}- \widetilde{\textbf{p}}_{ij} \right> = 0, \\
    \left<T(\textbf{p}_{f_if_{f_i\backslash i}}), \textbf{p}_{ij}- \widetilde{\textbf{p}}_{ij} \right> = 0, \\
    \left<T(\textbf{p}_{f_if_{f_i\backslash i}}) \times (\textbf{p}_{ij}- \widetilde{\textbf{p}}_{ij}), \textbf{p}_{if_i}\right> = 0. \\
\label{eq:case2_2}
\end{aligned}
\end{equation}

Both $T(\textbf{p}_{f_if_{f_i\backslash i}})$ and 
$\textbf{p}_{ij}- \widetilde{\textbf{p}}_{ij}$
are in the plane perpendicular to $\textbf{p}_{if_i}$.
Hence, $T(\textbf{p}_{f_if_{f_i\backslash i}}) \times (\textbf{p}_{ij}- \widetilde{\textbf{p}}_{ij}) = \alpha \textbf{p}_{if_i}$
holds for some $\alpha\neq0$.
However, from Eq.~\ref{eq:case2_2}, we have
\begin{equation}
\begin{aligned}
    \left<T(\textbf{p}_{f_if_{f_i\backslash i}}) \times (\textbf{p}_{ij}- \widetilde{\textbf{p}}_{ij}), \textbf{p}_{if_i}\right>  = \alpha\left<\textbf{p}_{if_i}, \textbf{p}_{if_i} \right> = 0. \\
\label{eq:case2_3}
\end{aligned}
\end{equation}
Since $\alpha\neq0$ and $\textbf{p}_{if_i}\neq\textbf{0}$, Eq.~\ref{eq:case2_3} causes a contradiction. Thus, the solution set contains a unique solution.

% \subsection{Efficiency via Less Torsion Angles} \label{sec:efficiency_less_angle}
% Our method achieves completeness by
% computing $O(nk)$ torsion angles, which is efficient especially 
% compared with methods including~\citet{klicpera2021gemnet,adams2022learning,ganea2021geomol}.
% Given a scenario where a global region is the union of two local regions with $n_1$ and $n_2$ nodes.
% As introduced in Sec.~\ref{sec:global} and Sec.~\ref{sec:local}, our method computes the same number of torsion angles as nodes in a local region,
% then employs a rotation angle (torsion angle essentially). Hence, the number of torsion angles that our method computes is $n_1+n_2+1$.
% \citet{ganea2021geomol} computes a torsion angle
% based on one pair of nodes, each of which is from a separate local region.
% Hence, the number of torsion angles needed is $n_1\times n_2$.
% Apparently, our method reduces the number of torsion angles from $O(nk^3)$ to $O(nk)$,
% which is significant considering the computing of torsion is excessively expensive.

\subsection{Model architecture} \label{sec:model architecture}

\textbf{Interaction Layer} updates each node feature vector $\textbf{v}$ based on features of the neighboring nodes and the corresponding 3D information in $P$. 
Firstly, it converts 3D information in $P$ to a set of geometries
based on the proposed complete geometric transformation $\mathcal{T}$ and message passing scheme.
Since distance is the most important geometry, we also consider $d$ in global representation and split the output of $\mathcal{T}$ into two tuples $(d,\theta,\phi)$ and $(d,\tau)$,
for local and global representations, respectively.

Importantly, the tuples $(d,\theta,\phi)$ and $(d,\tau)$ cannot serve as immediate inputs to the network.
They need to be transformed into physically meaningful vectors based on quantum-based basis functions.
As in previous studies~\cite{hu2021forcenet,klicpera_dimenet_2020,liu2022spherical,klicpera2021gemnet}, we test different basis functions including MLP, Gaussian and sine functions, spherical Bessel functions, and spherical harmonics. We found spherical Bessel and spherical harmonics perform best.
Formally, the basis function for tuple $(d,\theta,\phi)$ is TBF $j_{\ell}\left(\frac{\beta_{\ell n}}{c}d\right)Y_{\ell}^{m}(\theta, \phi)$,
where $j_{\ell}(\cdot)$ is a spherical Bessel function of order $\ell$,
$Y_{\ell}^{m}$ is a spherical harmonic function of degree $m$ and order $\ell$,
$c$ is the cutoff,
and $\beta_{\ell n}$ is the $n$-th root of the Bessel function of order $\ell$. 
The basis function for tuple $(d,\tau)$ is SBF $j_{\ell}\left(\frac{\beta_{\ell n}}{c}d\right)Y_{\ell}^{0}(\tau)$.
These two basis functions are also used in SphereNet~\cite{liu2022spherical} and GemNet~\cite{klicpera2021gemnet}.

The two physically meaningful vectors from TBF and SBF are then imported into
a local convolution layer and a global convolution layer, respectively.
For both convolution layers, we use the GraphConv~\cite{morris2019weisfeiler} implemented in the PyTorch Geometric library~\cite{Fey/Lenssen/2019}.
The vectors from the basis functions are used as edge weights in the convolution layers.
The outputs of local and global convolution layers are concatenated to generate a new node feature vector. 
Then the concatenated vector is forwarded into several linear layers to generate the updated
feature vector $\textbf{v}^\prime$.

\subsection{Data Description} \label{sec:data}

\textbf{OC20.} The Open Catalyst 2020 (OC20) dataset~\cite{chanussot2021open} is a newly released dataset to model and discover catalysts.
Specifically, the goal is efficient DFT approximation of structure relaxation, 
which is a fundamental calculation in catalysis to determine a structure's activity and selectivity.
All the structures in the dataset contain a surface and an adsorbate, 
and the surface is defined by a unit cell that is periodic in all directions.
There are three tasks including 
Structure to Energy and Forces (S2EF),
Initial Structure to Relaxed Structure (IS2RS), 
and Initial Structure to Relaxed Energy (IS2RE).

In this work, we focus on Initial Structure to Relaxed Energy (IS2RE) task, which is the most common task in catalysis 
as the relaxed energies are often correlated with catalyst activity and selectivity. 
The dataset for IS2RE is originally split into training, validation, and test sets.
The training set contains 460,328 structures and the validation set has four splits including 
in-domain (ID), %with 24,043 structures
out-of-domain adsorbate (OOD Ads), %with 24,961 structures, 
out-of-domain catalyst (OOD Cat), %with 24,963 structures, 
and out-of-domain adsorbate and catalyst (OOD Both), 
with 24,733, 24,961, 24,738, 24,971 structures respectively.

\textbf{Molecule3D.} The Molecule3D dataset~\cite{xu2021molecule3d} is a newly proposed large-scale dataset, including around 4 million molecules with precise ground-state geometries derived from DFT. 
The dataset is collected from PubChemQC~\cite{nakata2017pubchemqc} and designed for predicting 3D geometries from molecular graphs~\cite{xu2021molecule3d} while 
we aim to learn representations and predict properties for molecules based on their geometries.
The dataset contains 3,899,647 molecules and is split into training, validation, and test sets via random and scaffold split with ratio 6:2:2. 
Random split ensures the training, validation, and test data are sampled from the same distribution while 
scaffold split leads to a distribution shift between training and test data.
The evaluation metric is the MAE between the predictions and the ground truth.

\textcolor{COLOR}{\textbf{QM9.} The QM9 dataset~\citep{ramakrishnan2014quantum} is a widely used dataset for predicting various properties of molecules. It includes geometric, energetic, electronic, and thermodynamic properties for 134k stable small organic molecules. The dataset is split into three sets, where the training set contains 110,000, the validation set contains 10,000, and the test set contains 10,831 molecules. The twelve properties are dipole moment ($\mu$), isotropic polarizability ($\alpha$), highest occupied molecular orbital energy ($\epsilon_\text{HOMO}$), lowest unoccupied molecular orbital energy ($\epsilon_\text{LUMO}$), gap between $\epsilon_\text{HOMO}$ and $\epsilon_\text{LUMO}$, electronic spatial extent ($\left< R^2 \right>$), zero point vibrational energy (ZPVE), internal energy at 0K ($U_0$), internal energy at 298.15K ($U$) , enthalpy at 298.15K ($H$), free energy at 298.15K ($G$) , and heat capavity at 298.15K ($c_\text{v}$). }

\subsection{Experimental Setup}   \label{sec:setup}
\textbf{ComENet.} The values/search space of model and training hyperparameters for ComENet on OC20, Molecule3D, and QM9 are provided in Table~\ref{tb:model_hyperpara} and Table~\ref{tb:training_hyperpara}. For Molecule3D and QM9, the optimal hyperparameters are chosen by the performance on the validation sets. For OC20, since the final comparison results are on validation set, we firstly use 10\% of the training data to choose optimal hyperparameters, then train our model on whole training data.
Specifically, we use a larger cutoff value for OC20 to generate graphs and larger hidden dimensions for OC20 and Molecule3D.
All models are trained on NVIDIA GeForce RTX 2080 Ti 11GB GPU for Molecule3D and QM9. 
For the OC20 dataset, we use NVIDIA RTX A6000 48GB GPU. Note that one experiment is only conducted on one GPU.

\textbf{Baselines for Molecule3D.} As Molecule3D is a newly proposed dataset, we run experiments and provide results for baseline methods including GIN-Virtual~\cite{hu2021ogb}, SchNet~\cite{schutt2017schnet}, DimeNet++\cite{klicpera_dimenetpp_2020} and SphereNet~\cite{liu2022spherical}. 
All the models are trained on one GPU (Nvidia GeForce RTX 2080 Ti 11GB). The model hyperparameters and training hyperparameters are listed in Table~\ref{tb:baseline_model_hyperpara} and Table~\ref{tb:baseline_training_hyperpara}. The optimal hyperparameters are chosen by the performance on validation set. Note that for DimeNet++\cite{klicpera_dimenetpp_2020} and SphereNet~\cite{liu2022spherical}, the maximum batch size is 32 due to the GPU memory limitation.

\begin{table*}[ht]
\begin{center}
\caption{Model hyperparameters for ComENet.}
\label{tb:model_hyperpara}
{\begin{tabular}{l|ccc}\toprule
&\multicolumn{3}{c}{Values/ search space} \\
\cmidrule(l{4pt}r{4pt}){2-4}
Model Hyperparameters & OC20 &Molecule3D &QM9\\
\midrule
Number of layers &4, 5, 6 &4, 5, 6, 8 &4, 5, 6 \\
Cutoff &6.0, 8.0 &5.0 &5.0 \\
Hidden dim &128, 256, 512 &128, 256, 512 &128, 256 \\
Hidden dim in Self-Atom layer &128, 256, 512 &128, 256, 512 &128, 256 \\
Number of layers in Self-Atom layer &2, 3, 4 &2, 3, 4 &2, 3, 4 \\
Number of layers of the MLP in Interaction layer &2, 3, 4 &2, 3, 4 &2, 3, 4 \\
Distance embedding dim & 6, 12 &6, 12 &6, 12 \\
Angle embedding dim & 3, 6 &3, 6 &3, 6 \\
\bottomrule
\end{tabular}}
\end{center}
% \vspace{5pt}
\end{table*}

\begin{table}[ht] %!htp
% \vspace{-30 pt}
\begin{center}
\caption{Training hyperparameters for ComENet.}
\label{tb:training_hyperpara}
% \resizebox{0.48\textwidth}{!}
% \small
{\begin{tabular}{l|ccc}\toprule
&\multicolumn{3}{c}{Values/ search space} \\
\cmidrule(l{4pt}r{4pt}){2-4}
% \midrule
Training hyperparameters  &OC20 &Molecule3D &QM9 \\
\midrule
Epochs &20 &300 &1000 \\
Batch size &64, 128 &128, 256 &32, 64, 128 \\
Learning rate 1e-3, 5e-4, 2e-4 & 1e-3, 5e-4, 2e-4 & 1e-3, 5e-4, 2e-4 &1e-3, 5e-4, 2e-4 \\
Learning rate decay factor &0.4, 0.5, 0.6 &0.4, 0.5, 0.6 &0.4, 0.5, 0.6 \\
Learning rate decay epochs &Milestone [4,7,10,12] &20, 30, 50 &100, 200 \\
Warmup epochs &2 &-- &-- \\
Warmup factor &0.2 &-- &-- \\
\bottomrule
\end{tabular}}
\end{center}
\end{table}

\begin{table*}[ht]
\begin{center}
\caption{Model hyperparameters for baseline methods on Molecule3D.}
\label{tb:baseline_model_hyperpara}
{\begin{tabular}{l|cccc}\toprule
&\multicolumn{4}{c}{Values/ search space} \\
\cmidrule(l{4pt}r{4pt}){2-5}
Model Hyperparameters & GIN-Virtual &SchNet &DimeNet++ & SphereNet\\
\midrule
Number of layers &4, 6, 8 &4, 6, 8 &3, 4, 5, 6 &3, 4, 5, 6 \\
Cutoff &- &10.0 &6.0 &6.0 \\
Hidden dim &600 &256 &128 &128 \\
\bottomrule
\end{tabular}}
\end{center}
\end{table*}

\begin{table}[!ht] %!htp
\begin{center}
\caption{Training hyperparameters for baseline methods on Molecule3D.}
\label{tb:baseline_training_hyperpara}
{\begin{tabular}{l|cccc}\toprule
&\multicolumn{4}{c}{Values/ search space} \\
\cmidrule(l{4pt}r{4pt}){2-5}
Training hyperparameters  &GIN-Virtual &SchNet &DimeNet++ &SphereNet \\
\midrule
Epochs &300 &300 &300 &300 \\
Batch size &64, 128, 256 &64, 128, 256 &16, 32 &16, 32 \\
Learning rate & 1e-3, 5e-4, 2e-4 & 1e-3, 5e-4, 2e-4 &1e-3, 5e-4, 2e-4 &1e-3, 5e-4, 2e-4 \\
Learning rate decay factor &0.4, 0.5, 0.6 &0.4, 0.5, 0.6 &0.4, 0.5, 0.6 &0.4, 0.5, 0.6 \\
Learning rate decay epochs &20, 30, 50 &20, 30, 50 &20, 30, 50 &20, 30, 50 \\
\bottomrule
\end{tabular}}
\end{center}
\end{table}

\end{document}